\documentclass{IEEEtran}
\usepackage{graphicx}
\usepackage{subcaption}
\usepackage[]{threeparttable}
\usepackage{tikz}
\usepackage{multirow} 
\usepackage{booktabs}
\usepackage{amssymb}
\usepackage{amsthm}
\usepackage{bm}
\usepackage{amsmath}
\usepackage{mathrsfs}
\newtheorem{assumption}{Assumption}
\newtheorem{theorem}{Theorem}[section]
\newtheorem{lemma}[theorem]{Lemma}
\newtheorem{proposition}[theorem]{Proposition}

\theoremstyle{definition} 

\theoremstyle{remark} 
\newtheorem{remark}[theorem]{Remark}
\newcommand{\Exp}{\mathbb{E}}
\DeclareMathOperator*{\argmin}{arg\,min} 
\DeclareMathOperator*{\argmax}{arg\,max} %

\usepackage{algorithm}
\usepackage{algpseudocode}

\begin{document}

\title{Natural Gradient Bayesian Filtering:
\\Geometry-Aware Filter for Dynamical Systems}

\author{Chang Liu$^{2}$, Wenhan Cao$^{1}$, Zeju Sun$^{3}$, Tianyi Zhang$^{1}$, Jiayu Yuan$^{2}$, Yi Zeng$^{2}$, Ting Yuan$^{4}$, Yao Lyu$^{1}$, Wei Wu$^{5}$, Stephen Shing-Toung Yau$^{6}$, Shengbo Eben Li$^{1}$\thanks{Corresponding author: Shengbo Eben Li.\\
$^{1}$School of Vehicle and Mobility, Tsinghua University, China.\\
$^{2}$School of Advanced Manufacturing and Robotics, Peking University, China.\\
$^{3}$Beijing Institute of Mathematical Sciences and Applications (BIMSA), China.\\
$^{4}$School of Automation and Intelligent Sensing, Shanghai Jiao Tong University, China.\\
$^{5}$State Key Laboratory of Intelligent Green Vehicle and Mobility, Tsinghua University.\\
$^{6}$Department of Mathematical Sciences, Tsinghua University, China.\\
This study is supported by National Science and Technology Major Project (No 2025ZD1606200), Beijing Natural Science Foundation with L257002, and NSF China with 92582205, 92567301.}}

\maketitle



\begin{abstract}

Bayesian filtering is a cornerstone of state estimation in complex systems such as aerospace systems, yet exact solutions are available only for linear Gaussian models. In practice, nonlinear systems are handled through tractable approximations, with Gaussian filters such as the extended and unscented Kalman filters being among the most widely used methods. This tutorial revisits Gaussian filtering from an information-geometric perspective, viewing the prediction and measurement update steps as inference procedures over state distributions. Within this framework, we introduce a geometry-aware Gaussian filtering approach that leverages natural gradient descent on the statistical manifold of Gaussian distributions. The resulting Natural Gradient Gaussian Approximation (NANO) filter iteratively refines the posterior mean and covariance while respecting the intrinsic geometry of the Gaussian family and preserving the positive definiteness of the covariance matrix. We further highlight fundamental connections to the classical Kalman filtering, showing that a single natural-gradient step exactly recovers the Kalman measurement update in the linear-Gaussian case. The practical implications of the proposed framework are illustrated through case studies in representative nonlinear estimation problems, including satellite attitude estimation, simultaneous localization and mapping, and state estimation for robotic systems including quadruped and humanoid robots.

\end{abstract}

\section{Introduction}

\IEEEPARstart{S}{tate} estimation concerns inferring the latent state of a dynamical system from noisy and partial observations, and it underpins a broad spectrum of modern applications ranging from astrophysics and robotics to power grids, manufacturing, and transportation. A principled way to formalize this inference task is sequential Bayesian estimation: at each time instant, one aims to characterize the conditional distribution of the current state given the entire measurement history, i.e., the filtering posterior. Operationally, this posterior is propagated in time through a recursion that alternates between (a) propagating uncertainty through the state dynamics and (b) incorporating new information carried by measurements \cite{chen2003bayesian}. The first part is governed by the Chapman--Kolmogorov integral, which transports a distribution forward via the transition kernel describing  state evolution; the second part applies Bayes' rule to combine the predicted distribution with the measurement likelihood that encodes the observation mechanism \cite{liu2017measurement}.

This recursion admits a closed-form solution only in special cases. In particular, when both the dynamics and the measurement model are linear and all uncertainties are Gaussian, the filtering posterior remains Gaussian over time and is completely characterized by its first two moments. The resulting recursion is the Kalman filter (KF) \cite{kalman1960new,liu2017measurement}, whose tractability relies on Gaussian closure under linear mappings and conjugacy under conditioning. Once either nonlinearity or non-Gaussianity is present, these properties generally fail, and the exact Bayes filter becomes analytically inaccessible \cite{sarkka2023bayesian}.

The practical implication is that nonlinear/non-Gaussian filtering is, in most settings, an approximation problem: one must decide how to represent and update uncertainty. Particle filtering offers a conceptually general solution by representing the posterior with a weighted ensemble of samples, yielding a discrete empirical approximation to the continuous density \cite{elfring2021particle}. While such methods can be asymptotically consistent, accurate performance in challenging regimes often demands many particles, which in turn induces a computational burden that may be incompatible with real-time constraints \cite{elfring2021particle}. A different and widely used compromise is to restrict the posterior family to Gaussians at each time step, giving rise to the class of Gaussian filters, which are often preferred in practice due to their favorable computational profile \cite{thrun2002probabilistic}.

Many Gaussian filters can be interpreted through a common construction: rather than tackling the nonlinear model directly, one first constructs a linear--Gaussian surrogate of the state-space model, which is typically an affine approximation with additive Gaussian noise, and then executes KF-style moment updates on that surrogate \cite{sarkka2023bayesian}. The essential distinction among popular Gaussian filters is therefore the linearization mechanism used to build the surrogate. The classical route is Jacobian-based local linearization via a first-order Taylor expansion, which leads to the extended Kalman filter (EKF) \cite{smith1962application,mcelhoe1966assessment}. 
The iterated EKF (IEKF) \cite{gelb1974applied} modifies this by repeatedly re-linearizing during the measurement update, using successive posterior-mean estimates; the resulting procedure is closely related to Gauss--Newton iterations and can be viewed as seeking a local maximum-a-posteriori solution \cite{bell1993iterated}.

However, point-based Taylor linearization has a structural weakness: it is driven primarily by a local mean expansion and does not preserve how covariance is transformed by nonlinear mappings. This motivates moment-based linearization, where the affine surrogate is chosen to reproduce, as closely as possible, the mean and covariance of the nonlinear transformation under a Gaussian input. Such constructions can be cast as statistical linear regression (SLR), in which the affine parameters minimize an expected regression error \cite{sarkka2023bayesian}. Implementing SLR entails evaluating Gaussian-weighted integrals, and different numerical integration rules yield different Gaussian filters: the unscented transform underlies the UKF \cite{julier1995new}, Gauss--Hermite quadrature yields the Gauss--Hermite KF \cite{arasaratnam2007discrete}, and spherical cubature rules lead to the cubature KF \cite{arasaratnam2009cubature}. More recently, the posterior linearization filter (PLF) applies SLR with respect to the posterior distribution during the update rather than with respect to the prior, with the goal of improving update accuracy \cite{sarkka2023bayesian}.

This ``linearize-then-KF'' paradigm---namely, building a linear--Gaussian surrogate model so that a Kalman-style update becomes applicable---is commonly termed the \textit{enabling approximation} framework \cite{sarkka2023bayesian}. While enabling approximation is ubiquitous in practical Gaussian filtering, its optimality in the measurement update is questionable. In particular, a key question is whether the update obtained by enabling approximation coincides with optimal Gaussian approximation to the true Bayesian posterior. The recent result in \cite{cao2024nonlinear,Cao2025AlgorithmDA} clarifies that, in contrast to the prediction stage where moment-matching rules employed by UKF/Gauss--Hermite KF/cubature KF admit an optimization-consistent interpretation \cite{julier1995new,arasaratnam2007discrete,arasaratnam2009cubature}, the update update in those methods does not generally recover the exact solution. Their argument relies on an optimization viewpoint, in which the prediction and update recursions are characterized as solutions to two different optimization problems, thereby exposing a structural mismatch introduced by the surrogate-based update.

Motivated by the optimization viewpoint of Bayesian filtering, a natural way to construct Gaussian filters is to restrict the filtering posterior to the Gaussian family and then compute the best member of this family by solving an explicit optimization problem. Under this perspective, the measurement update is no longer implemented by surrogate linearization and a Kalman-style recursion, but instead by directly optimizing the Gaussian parameters so that the resulting approximation matches the true Bayesian posterior as closely as possible according to a chosen criterion. In fact, this idea has been implemented in a class of \emph{gradient-based Gaussian filters} that use gradient descent and its variants to iteratively refine the Gaussian approximation; for example, Gultekin and Paisley \cite{gultekin2017nonlinear} propose stochastic gradient schemes in which Monte Carlo sampling is used to approximate gradients that are otherwise difficult to evaluate exactly.

A key challenge in this line of work is that straightforward gradient descent on Gaussian parameters is not geometry-aware. In particular,  Gaussian distributions do not form a flat Euclidean parameter space; rather, they possess a natural information-geometric structure, under which the meaning of the steepest descent direction depends on the intrinsic metric of the distribution family. As a consequence, Euclidean-gradient updates can be sensitive to the chosen parameterization, may suffer from poor scaling, and often demand careful step-size tuning to achieve stable progress. These observations motivate the use of natural-gradient methods, which incorporate the geometry of the statistical manifold and produce descent directions that are invariant to reparameterization. In practice, geometry-aware updates frequently improve numerical robustness and convergence behavior, while offering a principled way to respect the structural constraints inherent in Gaussian approximations.

Motivated by these considerations, this tutorial aims to place Gaussian filtering within a unified optimization and information-geometric framework. Its main goal is to provide an intuitive perspective on how geometry-aware updates arise naturally in approximate Bayesian filtering and how they can be translated into practical algorithms. The main contributions of this article are summarized as follows:
\begin{itemize}
    \item We present an information-geometric interpretation of Gaussian filtering, viewing prediction and measurement update as projection problems on the statistical manifold of Gaussian distributions. This perspective clarifies the role of approximation in nonlinear Bayesian filtering and highlights the geometric structure underlying commonly used Gaussian filters.
    \item Within the proposed framework, the measurement update is interpreted as a geometry-aware optimization problem. This leads to a natural gradient descent update that respects the intrinsic structure of Gaussian distributions, maintains positive-definite covariance representations, and remains invariant to parameterization.
    \item To support practical understanding and adoption, the article presents representative case studies, including satellite attitude estimation, simultaneous localization and mapping (SLAM), and state estimation of quadruped and humanoid robot .
\end{itemize}

The structure of this article is as follows. 
Section \ref{sec.background} presents the optimization perspective on Bayesian filtering and primer on information geometry. 
Section \ref{sec.ngf} reviews the development of NANO-filter utilizing natural gradient descent. 
Representative case studies will be introduced in Section \ref{sec.example}, and we finally draw conclusions in Section \ref{sec.conclusion}.





\section{Background}\label{sec.background}

\subsection{Bayesian Filtering Recap}
Consider the following nonlinear discrete-time stochastic system:
\begin{equation}\label{eq.SSM}
\begin{aligned}
x_{t+1} &= f(x_t) + \xi_t,\\
y_t &= g(x_t) + \zeta_t,
\end{aligned}
\end{equation}
where $x_t \in \mathbb{R}^n$ denotes the latent state and $y_t \in \mathbb{R}^m$ is the corresponding noisy measurement. The mapping $f:\mathbb{R}^n \to \mathbb{R}^n$ is the transition function that governs the state evolution, while $g:\mathbb{R}^n \to \mathbb{R}^m$ is the measurement function that specifies the measurement mechanism. The random terms $\xi_t$ and $\zeta_t$ represent process noise and measurement noise, respectively. A standard assumption is that the initial state $x_0$, the process noise sequence $\{\xi_t\}$, and the measurement noise sequence $\{\zeta_t\}$ are mutually independent, and that both noise sequences are independent over time.

An equivalent probabilistic description of 
\eqref{eq.SSM} is given by the hidden Markov model (HMM)
\begin{equation}\label{eq.HMM}
\begin{aligned}
x_0 &\sim p(x_0),\\
x_t &\sim p(x_t | x_{t-1}),\\
y_t &\sim p(y_t | x_t),
\end{aligned}
\end{equation}
where $p(x_t | x_{t-1})$ and $p(y_t | x_t)$ are the transition and emission densities, respectively, and $p(x_0)$ is the distribution of the initial state. In fact, \eqref{eq.SSM} and \eqref{eq.HMM} are different formulations that describe the same model. For example, if the transition model is
\[
x_t = A x_{t-1} + \xi_{t-1}, \quad \xi_{t-1} \sim \mathcal{N}(0,Q),
\]
with $Q$ denoting the process-noise covariance, then the corresponding transition density is
\[
p(x_t | x_{t-1}) = \mathcal{N}(x_t; A x_{t-1}, Q).
\]

The objective of state estimation is to infer $x_t$ from the noisy observations $y_{1:t}$. In general, an optimal estimator is obtained in two steps: (i) compute the posterior distribution $p(x_t | y_{1:t})$, and (ii) extract a point estimate $\hat{x}_{t | t}$ from this posterior according to a chosen criterion, such as minimum mean-square error (MMSE) or maximum a posteriori (MAP) estimation.

A principled recursion for computing $p(x_t | y_{1:t})$ is provided by Bayesian filtering, which alternates between prediction and update:
\begin{subequations}\label{eq.BF}
\begin{align}
p(x_t | y_{1:t-1})
&= \int p(x_t | x_{t-1})\, p(x_{t-1} | y_{1:t-1}) \, \mathrm{d}x_{t-1},
\label{eq.prediction}\\
p(x_t | y_{1:t})
&= \frac{p(y_t | x_t)\, p(x_t | y_{1:t-1})}
{\int p(y_t | x_t)\, p(x_t | y_{1:t-1}) \, \mathrm{d}x_t }.
\label{eq.update}
\end{align}
\end{subequations}
The prediction step \eqref{eq.prediction} uses the Chapman--Kolmogorov equation to propagate the previous posterior through the transition model to form the prior $p(x_t | y_{1:t-1})$. The update step \eqref{eq.update} then incorporates the new measurement via Bayes' rule, where $p(y_t | x_t)$ acts as the likelihood. Once the filtering posterior is available, standard decision rules can be applied to obtain the state estimate.

\subsection{An Optimization Perspective on Bayesian Filtering}

Motivated by the optimization perspective of Bayesian inference in \cite{knoblauch2022optimization}, the prediction and update steps of Bayesian filtering also admit variational characterizations. Throughout this section, $q:\mathbb{R}^n\to\mathbb{R}_+$ denotes a candidate probability density.

Let us first introduce the optimization perspective of the prediction step of Bayesian filtering. Recall that the prior density is
\[
p(x_t| y_{1:t-1})
=
\int p(x_t| x_{t-1})\,p(x_{t-1}| y_{1:t-1})\,\mathrm{d}x_{t-1}.
\]
In fact, $p(x_t|y_{1:t-1})$ can be recovered from the variational problem
\begin{equation}\label{eq.BF_prediction_optimization_rewrite}
p(x_t| y_{1:t-1})
=
\argmax_{q(x_t)}
\Exp_{p(x_{t-1}| y_{1:t-1})}\Exp_{p(x_t| x_{t-1})}
\Bigl[\log q(x_t)\Bigr].
\end{equation}
To see why, note that adding a $q$-independent constant does not change the maximizer. Hence, for any constant $Z>0$, we have
\[
\begin{aligned}
&\argmax_{q(x_t)}
\Exp_{p(x_{t-1}| y_{1:t-1})}\Exp_{p(x_t| x_{t-1})}
\bigl[\log q(x_t)\bigr]
\\
=&
\argmin_{q(x_t)}
\Exp_{p(x_{t-1}| y_{1:t-1})}\Exp_{p(x_t| x_{t-1})}
\Bigl[\log \frac{Z}{q(x_t)}\Bigr].
\end{aligned}
\]
Choosing
\[
Z \;=\; \int p(x_t| x_{t-1})\,p(x_{t-1}| y_{1:t-1})\,\mathrm{d}x_{t-1},
\]
the objective becomes
\[
\Exp_{p(x_t| y_{1:t-1})}\Bigl[\log \frac{p(x_t| y_{1:t-1})}{q(x_t)}\Bigr]
=
D_{\mathrm{KL}}\!\bigl(p(x_t| y_{1:t-1})\,\|\,q(x_t)\bigr),
\]
which is uniquely minimized when $q(x_t)=p(x_t| y_{1:t-1})$. This yields the characterization in \eqref{eq.BF_prediction_optimization_rewrite}.

We next turn to the measurement update. The posterior distribution is characterized by \cite{cao2023generalized,cao2025robust}
{\small
\begin{equation}\label{eq.BF_update_optimization_rewrite}
\begin{aligned}
&p(x_t| y_{1:t})
\\
=&
\argmin_{q(x_t)}
\Bigl\{
\Exp_{q(x_t)}\bigl[-\log p(y_t| x_t)\bigr]
+
D_{\mathrm{KL}}\!\bigl(q(x_t)\,\|\,p(x_t| y_{1:t-1})\bigr)
\Bigr\}.
\end{aligned}
\end{equation}}
Expanding the KL divergence shows that, up to additive constants independent of $q$, we have
\[
\begin{aligned}
&\Exp_{q(x_t)}\bigl[-\log p(y_t| x_t)\bigr]
+
D_{\mathrm{KL}}\!\bigl(q(x_t)\,\|\,p(x_t| y_{1:t-1})\bigr)
\\
=&
\Exp_{q(x_t)}\Bigl[\log\frac{q(x_t)}{p(y_t| x_t)\,p(x_t| y_{1:t-1})}\Bigr].
\end{aligned}
\]
Then let us introduce the normalization constant
\[
Z \;=\; \int p(y_t| x_t)\,p(x_t| y_{1:t-1})\,\mathrm{d}x_t.
\]
Adding $\log Z$ inside the expectation does not change the minimizer, and the objective becomes equivalent to
\[
\begin{aligned}
&\Exp_{q(x_t)}\Bigl[\log\frac{q(x_t)}{p(y_t| x_t)\,p(x_t| y_{1:t-1})/Z}\Bigr]
\\
=&
D_{\mathrm{KL}}\!\Bigl(
q(x_t)\,\Big\|\,
\frac{p(y_t| x_t)\,p(x_t| y_{1:t-1})}{Z}
\Bigr).
\end{aligned}
\]
Therefore, the minimizer is uniquely attained at
\[
q^\star(x_t)
=
\frac{p(y_t| x_t)\,p(x_t| y_{1:t-1})}
{\int p(y_t| x_t)\,p(x_t| y_{1:t-1})\,\mathrm{d}x_t}
=
p(x_t| y_{1:t}),
\]
which coincides with the Bayesian posterior.


\begin{figure}
    \centering
    \includegraphics[width=0.99\linewidth]{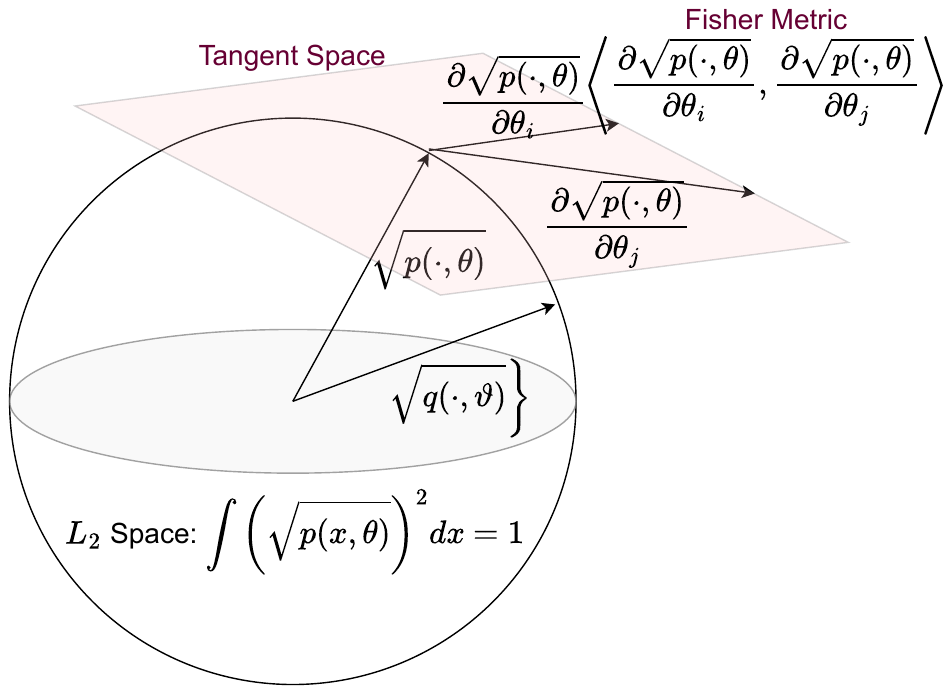}
    \caption{Illustration of Fisher information metric}
    \label{fig.ig}
\end{figure}

\subsection{Information Geometry Primer}
From the dynamical system point of view, the purpose of a filtering algorithm is to find a proper evolution strategy of conditional probability distributions over time, i.e., to determine a computable map:
\begin{equation}
	t \rightarrow \mu_{t} \in \mathcal{P}(\mathbb{R}^{n}),
	\label{eq:5}
\end{equation} 
where $\mathcal{P}(\mathbb{R}^{n})$ is the set of all probability measures on $\mathbb{R}^{n}$ with positive density functions, such that in some sense, the probability measure, $\mu_{t}$, is necessarily close to the solution of Bayesian filter, $p(x_{t}|y_{1:t})$, for each time $t\in \mathbb{N}$.

If we think of the set $\mathcal{P}(\mathbb{R}^{n})$ as an infinite-dimensional manifold, then based on the optimization perspective on Bayesian filtering, the computable map \eqref{eq:5} can be obtained and interpreted in a differential geometric approach. The studies on the geometric interpretation of Bayesian filtering belong to the subject called \textbf{Information Geometry}, in which all the problems involving the probability distributions are treated and solved from a geometric perspective. 

The study of information geometry dates back to the 1940s, when C. R. Rao first formulated the parameter space of probability distributions into a Riemannian manifold, where the Riemannian metric is defined by Fisher information matrix \cite{Rao1945, Rao1948}. The theory of information geometry has been comprehensively developed since 1980s. Readers who are interested in this subject may refer to the important monographs such as \cite{AmariNagaoka2000}.

This geometric treatment of probability distributions inspires more accurate and efficient filter designs, and also make the improvements on the filter performance more theoretically interpretable. Hence, we briefly introduce the basic ideas of information geometry first, before moving on to the design of the novel Gaussian filter.
\subsubsection{Statistical Manifolds}
The manifold structure of the set of probability measures, $\mathcal{P}(\mathbb{R}^{n})$, is rooted in the constraint that the integral of a probability density over the whole space $\mathbb{R}^{n}$ equals one. In order to avoid dealing with infinite-dimensional objects, it is common to consider a finite-dimensional submanifold 
\begin{equation}
	S = \{p(\cdot, \theta): \ \theta\in \Theta\},
\end{equation}
which is parameterized by $\theta\in\Theta$, and the parameter space $\Theta\subset \mathbb{R}^{m}$ is assumed to be an open set in the $m$-dimensional Euclidean space, $\mathbb{R}^{m}$. 

Mathematically, for a given $m$-dimensional differential manifold, it is natural to consider its tangent space (which is an $m$-dimensional Euclidean space at each point of the manifold) and define an inner product on the tangent space. The inner product on the tangent space at each point of the manifold together forms a metric on the manifold, which is called the \textbf{Riemannian metric}, and a manifold with a given Riemannian metric is called a \textbf{Riemannian manifold}.

Specifically, a well-defined metric on the tangent space of manifold of the parameterized probability densities can help us properly measure the distance between two probability distributions and construct an accessible trajectory between them. As a submanifold of the probability measures, it is also demanding that the Riemannian metric will inherit and preserve the geometric properties of $\mathcal{P}(\mathbb{R}^{n})$, such that the manifold will not be `twisted' by the parametrization.

However, it is not easy to determine a `natural' inner product on $\mathcal{P}(\mathbb{R}^{n})$, because we only assume that the probability density functions are integrable, i.e., $\mathcal{P}(\mathbb{R}^{n})\subset L^{1}(\mathbb{R}^{n})$, and $L^{1}(\mathbb{R}^{n})$ is not a Hilbert space. Fortunately, each probability density function in $\mathcal{P}(\mathbb{R}^{n})$ is positive, as previously assumed, and thus, for each density function $p(x)\in \mathcal{P}(\mathbb{R}^{n})$, $\sqrt{p}\in L^{2}(\mathbb{R}^{n})$ is square-integrable, and the space of all square-integrable functions on $\mathbb{R}^{n}$, $L^{2}(\mathbb{R}^{n})$, is a Hilbert space, with inner product
\begin{equation}
	\langle \sqrt{p},\sqrt{q}\rangle = \int \sqrt{p(x)}\sqrt{q(x)}dx,
\end{equation}
for each probability density functions $p, q\in\mathcal{P}(\mathbb{R}^{n})$, see Fig.~\ref{fig.ig}. Moreover, corresponding to this inner product, the so-called \textbf{Hellinger metric} between two probability density functions $p$ and $q$ can be defined as
\begin{equation}
	H(p,q) := \|\sqrt{p} - \sqrt{q}\|_{L^{2}} = \sqrt{\langle \sqrt{p}-\sqrt{q}, \sqrt{p}-\sqrt{q}\rangle }.
\end{equation}

Inspired by the above discussion, let us consider the corresponding set of square roots of densities:
\begin{equation}
	S^{1/2} = \{\sqrt{p(\cdot, \theta)}:\theta\in \Theta\}\subset\ L^{2}(\mathbb{R}^{n}).
\end{equation}
Then, $S^{1/2}$ can be regarded as a finite-dimensional submanifold of the Hilbert space $L^{2}(\mathbb{R}^{n})$, with tangent space at each $\sqrt{p(\cdot, \theta)}$ defined by
\begin{equation}
	L_{\sqrt{p(\cdot, \theta)}}S^{1/2} :=	\left\{\frac{\partial\sqrt{p(\cdot, \theta)}}{\partial \theta_{1}},\cdots,\frac{\partial\sqrt{p(\cdot, \theta)}}{\partial \theta_{m}}\right\},
\end{equation}
and the Riemannian metric is defined as
\begin{equation}
	\begin{aligned} 
	I_{ij}(\theta)& = 4 \left\langle \frac{\partial\sqrt{p(\cdot, \theta)}}{\partial \theta_{i}}, \frac{\partial\sqrt{p(\cdot, \theta)}}{\partial \theta_{j}}\right\rangle \\
	&= \int \frac{1}{p(x,\theta)}\frac{\partial p(x,\theta)}{\partial \theta_{i}}\frac{\partial p(x,\theta)}{\partial \theta_{j}}dx,
	\end{aligned} 
	\label{eq:12}
\end{equation}
where the scaling number 4 stems from the Fisher information matrix which we will introduce next. 

The manifold $(S^{1/2}, I)$ of square roots of densities, together with the Riemannian metric $I(\theta) = (I_{ij}(\theta))_{i,j=1}^{n}$ defined above, is called a \textbf{statistical manifold}. 

\subsubsection{Fisher Information Matrix}
According to the above construction of a statistical manifold, we have naturally introduced the \textbf{Fisher Information Matrix} $I$ as the Riemannian metric, with each entries of $I$ defined in \eqref{eq:12}. An equivalent but more common expression of the Fisher information matrix is given by
\begin{equation}
	\begin{aligned} 
	I_{ij}(\theta) &= \mathbb{E}_{p(x,\theta)}\left[\frac{\partial \log p(x,\theta)}{\partial \theta_{i}}\frac{\partial \log p(x,\theta)}{\partial \theta_{j}}\right] \\
	&= - \mathbb{E}_{p(x,\theta)}\left[\frac{\partial^{2}\log p(x,\theta)}{\partial \theta_{i}\partial \theta_{j}}\right],
	\end{aligned} 
	\label{eq:13}
\end{equation}
and the equivalence between \eqref{eq:12} and \eqref{eq:13} can be derived from the integration-by-part formula.

Another important insight is that the equivalent definition \eqref{eq:13} of Fisher Information Matrix is the Hessian matrix of  K-L divergence:
\begin{equation}
	\left. \nabla_{\theta'}^{2} D_{KL}(p(x,\theta)\| p(x,\theta')) \right|_{\theta' = \theta}= I(\theta),
\end{equation}

The K-L divergence is invariant under reparameterization, 
as it quantifies a difference between probability distributions independently 
of their parametric representation. Consequently, $I_{ij}(\theta)$ obtained from its second-order local expansion inherits this invariance, which guarantees that the Fisher information matrix is a covariant tensor on the statistical manifold, and thus a candidate of Riemannian metric.

Notice that the gradient of K-L divergence is zero. Therefore, Fisher information matrix itself describes the local behavior of the K-L divergence, 
\begin{equation}
	D_{KL}(p(x,\theta)||p(x,\theta+\delta)) = \frac{1}{2}\delta^{\top}I(\theta)\delta + O(|\delta|^{2}),
\end{equation}
which approximates the distance between two probability distributions in a small neighborhood on a given statistical manifold. This observation provides a principled foundation for natural gradient methods, interpretable as steepest descent directions under information-theoretic constraints, leading to parameterization-invariant, stable, and well-conditioned optimization algorithms.

A natural question at this point is why the Fisher information metric is regarded as the \emph{canonical} Riemannian metric on statistical manifolds, instead of being one choice among many. The key justification is provided by Chentsov's theorem also spelled \v{C}encov's theorem, which characterizes Fisher information as essentially the unique Riemannian metric compatible with the statistical notion of information loss under data processing \cite{chentsov1982statistical,AmariNagaoka2000}.

To state the idea, consider a \emph{Markov morphism} (stochastic map) $\mathscr{T}$ that transforms an observation $x$ into $z$ via a Markov kernel $\mathscr{T}(z|x)$. This induces a push-forward of distributions, $p \mapsto \mathscr{T}p$, and represents coarse-graining or randomization of data. Since such a transformation cannot increase the distinguishability of probability distributions, it is natural to require that any statistical distance and its infinitesimal counterpart, a Riemannian metric, should be monotone under $\mathscr{T}$, i.e., it should contract under Markov morphisms.

\begin{theorem}
On the manifold of strictly positive probability distributions over a finite sample space, any smooth Riemannian metric that is monotone under all Markov morphisms is a constant multiple of the Fisher information metric. Equivalently, up to an overall positive scaling factor, the Fisher information metric is the unique Riemannian metric on the statistical manifold that is invariant under sufficient statistics and satisfies the information-processing (data-processing) principle \cite{chentsov1982statistical,AmariNagaoka2000}.
\end{theorem}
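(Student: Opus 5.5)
The plan follows Chentsov's original strategy, in the form given by Campbell and by Amari--Nagaoka. Monotonicity under all Markov morphisms forces invariance under the special morphisms that are invertible on their image, namely \emph{Markov embeddings} (congruent embeddings). Invariance under these pins the metric down almost completely.

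\textbf{Step 1: reduce monotonicity to invariance.} Let $\mathcal{P}_n$ be the open simplex of strictly positive distributions on $\{1,\dots,n\}$, with tangent vectors $v\in\mathbb{R}^n$, $\sum_i v_i=0$. Take a partition $\{A_1,\dots,A_n\}$ of $\{1,\dots,N\}$ and weights $q^{(i)}$ supported on $A_i$. The embedding $f:\mathcal{P}_n\to\mathcal{P}_N$, $f(p)_j=p_i q^{(i)}_j$ for $j\in A_i$, is a Markov morphism. The coarse-graining $\kappa:\mathcal{P}_N\to\mathcal{P}_n$ that lumps each $A_i$ is also a Markov morphism, and $\kappa\circ f=\mathrm{id}$. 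Monotonicity applied to $f$ and to $\kappa$ gives
\[
g^{(N)}_{f(p)}(df\,v,df\,v)=g^{(n)}_p(v,v).
\]
Permutations of the sample space are the special case $N=n$, so each $g^{(n)}$ is also permutation invariant.

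\textbf{Step 2: compute at the uniform point.} Permutation invariance at the uniform distribution $u_n$ forces $g^{(n)}_{u_n}(e_i,e_j)=a_n\delta_{ij}+b_n$ in the ambient coordinates. The $b_n$ term vanishes on tangent vectors, since $\sum_i v_i=0$. Embedding $u_n$ into $u_{nk}$ by splitting each atom into $k$ equal pieces maps $v$ to a vector with $k$ copies of $v_i/k$. Invariance then gives $a_{nk}\,k\cdot(1/k^2)=a_n$, i.e. $a_{nk}=k a_n$. Combining this relation for different $n$ (through the common refinement $u_{nm}$) yields $a_n=c\,n$ for a constant $c$. Hence $g_{u_n}(v,v)=c\sum_i v_i^2/(1/n)$, which is $c$ times the Fisher metric at $u_n$.

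\textbf{Step 3: extend to rational and then all points.} For $p$ with rational entries $p_i=m_i/N$, embed $\mathcal{P}_n$ into $\mathcal{P}_N$ by splitting atom $i$ uniformly into $m_i$ pieces. This sends $p$ to $u_N$ and $v_i$ to $m_i$ copies of $v_i/m_i$. Step 2 then gives
\[
g_p(v,v)=cN\sum_i m_i\,(v_i/m_i)^2=c\sum_i v_i^2/p_i,
\]
which is exactly $c\,I(p)$ in the form of \eqref{eq:12} for the finite sample space. Rational points are dense in $\mathcal{P}_n$ and $g$ is smooth (continuity is enough), so the identity holds everywhere. Positivity of $g$ forces $c>0$.

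\textbf{Step 4: converse.} The Fisher metric itself is monotone. This follows from a Cauchy--Schwarz (Jensen) inequality on the pushed-forward score, $\bigl(\sum_x T(z|x)v_x\bigr)^2/\sum_x T(z|x)p_x\le\sum_x T(z|x)v_x^2/p_x$, summed over $z$. Equality holds when the statistic is sufficient, which gives the "equivalently" formulation.

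The main obstacle is the bookkeeping in Steps 1--2. One must show that invariance under embeddings is genuinely implied by monotonicity, using the left inverse. One must also make the constants $a_n$ consistent across different $n$, for which the common refinement $u_{nm}$ is used, and handle the degenerate $b_n$ component carefully.
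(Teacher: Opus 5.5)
The paper does not prove this theorem. It quotes it as a classical result and defers to Chentsov and Amari--Nagaoka. Your argument is the standard proof found in those sources, and it is correct. Its steps are: monotonicity together with the coarse-graining left inverse forces invariance under Markov embeddings; permutation symmetry at $u_n$; the scaling $a_{nk}=k\,a_n$ made consistent through $u_{nm}$; rational points via uniform splitting; continuity; and Cauchy--Schwarz for the converse.

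One point deserves to be made explicit, because the paper's wording (``the manifold of strictly positive probability distributions over a finite sample space'') hides it. Your proof needs the metric to be a family $g^{(N)}$ defined on $\mathcal{P}_N$ for every $N$ simultaneously, as Chentsov's theorem itself does. Step 2 cannot be carried out otherwise, since it passes from $u_n$ to $u_{nk}$.

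This is not a technicality. On a single two-point space, the metric $\mathrm{d}p^2/\bigl(p^2(1-p)^2\bigr)$ is, up to scale, the Hilbert metric of the interval. It is contracted by every $2\times 2$ stochastic matrix: writing the induced map as $p\mapsto b+(a-b)p$ with $a\ge b$, one checks $(a-b)\,p(1-p)\le \phi(p)\bigl(1-\phi(p)\bigr)$ by expanding. Yet this metric is not a multiple of the Fisher metric $\mathrm{d}p^2/\bigl(p(1-p)\bigr)$. So you were right to read the statement as a statement about the whole family.
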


This result explains in a precise sense why the Fisher information matrix provides a natural geometry for statistical models: it is the only metric (up to scale) that respects the fundamental statistical operation of mapping data through a stochastic transformation. Consequently, optimization methods based on this geometry---notably natural gradient descent---inherit parameterization-invariance and are aligned with information-theoretic notions of distinguishability.

\subsubsection{Natural Gradient Descent}
In classical optimization, gradient descent methods rely on the Euclidean geometry of the parameter space. However, when the optimization object is a parameterized probability distribution on a statistical manifold (e.g., the loss function is treated as a function of distribution parameters), the Euclidean gradient may be an inefficient or misleading direction of steepest descent, because it does not account for the intrinsic geometry of the parameter space.

The \textbf{natural gradient} corrects this by using the Riemannian metric (Fisher information matrix) to define the steepest descent direction on the statistical manifold. Given a smooth function $f: \Theta \to \mathbb{R}$ to be minimized, the natural gradient $\tilde{\nabla} f(\theta)$ at point $\theta$ is defined as
\begin{equation}
	\tilde{\nabla} f(\theta) = I^{-1}(\theta) \nabla f(\theta),
	\label{eq:20}
\end{equation}
where $\nabla f(\theta)$ is the ordinary Euclidean gradient (a column vector), and $I^{-1}(\theta)$ is the inverse of the Fisher information matrix at $\theta$.

In fact, the natural gradient $\tilde{\nabla}f(\theta)$ defined in \eqref{eq:20} is just the \textit{gradient }on the statistical manifold $(S^{1/2},I)$ and the natural gradient descent update rule is then given by
\begin{equation}
	\theta_{t+1} = \theta_t - \eta_t \, I^{-1}(\theta_t) \nabla f(\theta_t),
\end{equation}
where $\eta_t > 0$ is the learning rate. In Bayesian filtering and machine learning, natural gradient descent often leads to faster and more stable convergence when optimizing over probability distributions.

Next, we will further illustrate the concepts of information geometry by a specific example: the exponential family. 

\subsubsection{Example: Exponential Distribution Family}
A probability density $p(x,\theta)$ belongs to the \textbf{exponential family} if it can be written in the form
\begin{equation}
	p(x,\theta) = \exp\left\{  \sum_{i=1}^m \theta_i F_i(x) - \psi(\theta) \right\},
	\label{eq:22}
\end{equation}
where $\theta = (\theta_1,\dots,\theta_m) \in \Theta\subset \mathbb{R}^{m}$ are the parameters, $F_i(x)$ are the sufficient statistics,  and $\psi(\theta)$ is the log‑partition function ensuring normalization:
\begin{equation}
	\psi(\theta) = \log \int \exp\left\{  \sum_{i=1}^m \theta_i F_i(x) \right\} dx.
\end{equation}
The most commonly-used Gaussian distribution family can also be viewed as an exponential distribution family. 

For the exponential family, the Fisher information matrix has a particularly simple form, which is stated as the following Theorem.
\begin{theorem} 
	\label{thm:1}
	For an exponential family 
	\begin{equation}
		S = \left\{p(x,\theta) =\exp\left\{  \sum_{i=1}^m \theta_i F_i(x) - \psi(\theta) \right\}:\ \theta\in \Theta \right\},
	\end{equation}
	the Fisher Information Matrix (or the Riemannian metric defined by \eqref{eq:12}) $I$ can be computed as
\begin{equation}
	I_{ij}(\theta) = \frac{\partial^2 \psi(\theta)}{\partial \theta_i \partial \theta_j}.
\end{equation}
That is, the metric is the Hessian of the log‑partition function. 
\end{theorem}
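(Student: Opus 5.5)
We prove the statement by working directly with the second expression for the Fisher information in \eqref{eq:13}, namely $I_{ij}(\theta) = -\mathbb{E}_{p(x,\theta)}\bigl[\partial^2 \log p(x,\theta)/\partial\theta_i\partial\theta_j\bigr]$. Its equivalence with the metric \eqref{eq:12} has already been stated in the paper. For the exponential family \eqref{eq:22} the log-density is affine in $\theta$ apart from the log-partition term:
\begin{equation*}
\log p(x,\theta) = \sum_{k=1}^m \theta_k F_k(x) - \psi(\theta).
\end{equation*}
Differentiating once in $\theta_i$ gives the score $F_i(x) - \partial\psi/\partial\theta_i$. Differentiating again in $\theta_j$ kills the linear term and leaves $-\partial^2\psi/\partial\theta_i\partial\theta_j$. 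This quantity does not depend on $x$, so taking $-\mathbb{E}_{p(x,\theta)}$ of it returns $\partial^2\psi(\theta)/\partial\theta_i\partial\theta_j$, which is the claim.

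To avoid relying only on the second-derivative form, we will also verify the result through the score-product form in \eqref{eq:13}. First we show that $\partial\psi/\partial\theta_i = \mathbb{E}_{p(x,\theta)}[F_i(x)]$. This follows by differentiating $\psi(\theta) = \log\int \exp\{\sum_k\theta_k F_k(x)\}\,dx$ under the integral sign. Differentiating once more, using the quotient rule on
\begin{equation*}
\frac{\int F_i e^{\theta\cdot F}\,dx}{\int e^{\theta\cdot F}\,dx},
\end{equation*}
gives
\begin{equation*}
\frac{\partial^2\psi}{\partial\theta_i\partial\theta_j} = \mathbb{E}[F_iF_j] - \mathbb{E}[F_i]\,\mathbb{E}[F_j] = \mathrm{Cov}_{p(x,\theta)}(F_i,F_j).
\end{equation*}
The score is $F_i - \mathbb{E}[F_i]$, so $\mathbb{E}[\text{score}_i\,\text{score}_j]$ equals the same covariance. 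The two routes therefore agree, and as a by-product $I(\theta)$ is positive semidefinite.

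The main obstacle is justifying the interchange of differentiation and integration. It is needed both for the moment identities for $\psi$ and for the equivalence of the forms in \eqref{eq:12} and \eqref{eq:13}, including $\mathbb{E}[\text{score}]=0$. We would handle it with the standard fact that $\Theta$ is an open subset of the natural parameter space. There, for $\theta$ in a small ball, the integrand $|F_i F_j|e^{\theta\cdot F}$ is dominated by a sum of terms $e^{\theta'\cdot F}$ with $\theta'$ slightly perturbed, each of which is integrable. Dominated convergence then permits differentiating under the integral to any order. Beyond this regularity point, the argument is a short calculation.
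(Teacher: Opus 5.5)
Your proposal is correct, but your main argument is not the one the paper uses. The paper never uses the second-derivative form of \eqref{eq:13}. It works directly from the definition \eqref{eq:12}, $I_{ij}=\int p^{-1}\,\partial_i p\,\partial_j p\,dx$, in three steps: it differentiates the normalization $\int p\,dx=1$ once to get $\partial_i\psi=\int F_i p\,dx$; it differentiates again to get $\partial_i\partial_j\psi=\int F_i\,\partial_j p\,dx$; and it substitutes $pF_i=\partial_i p+p\,\partial_i\psi$, so that normalization kills the leftover term $\partial_i\psi\,\partial_j\int p\,dx$. Your one-line route via $-\mathbb{E}[\partial_i\partial_j\log p]$ is shorter, but its rigor is borrowed from the equivalence of \eqref{eq:12} and \eqref{eq:13}, which the paper only asserts. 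That equivalence amounts to $\int\partial_i\partial_j p\,dx=0$, i.e.\ differentiating the normalization twice under the integral sign, which is the same kind of step the paper carries out explicitly. For a theorem phrased in terms of the metric \eqref{eq:12}, the paper's route is therefore the self-contained one. Your second route is essentially the paper's computation in different clothing. Since $\partial_i\log p=\partial_i p/p$ pointwise, the score-product form of \eqref{eq:13} is literally \eqref{eq:12}. Writing both it and $\partial_i\partial_j\psi$ as $\mathrm{Cov}(F_i,F_j)$ replaces the paper's substitution trick, and as a bonus shows $I(\theta)\succeq 0$ and makes the later identity $\partial\varrho/\partial\theta=I(\theta)$ transparent. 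Your dominated-convergence justification for differentiating under the integral is a genuine addition, since the paper performs these interchanges without comment.
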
 

\begin{remark}
	In the following proof of Theorem \ref{thm:1}, the elegant mathematical properties of the exponential distribution family are revealed through the derivations. The fact that $\int p(x,\theta) dx = 1$ is repeatedly used which makes the expressions more concise. For readers who are not familiar with the concept of information geometry, it is encouraged to derive this proof line-by-line, which will provide valuable insight into how the structural properties of the exponential family facilitate analytical tractability.
\end{remark}

\begin{proof} 
For a probability density function $p(x,\theta)$ defined by \eqref{eq:22}, we have
\begin{equation}
	\frac{\partial p}{\partial \theta_{i}}(x,\theta) = p(x,\theta)\left(F_{i}(x) - \frac{\partial \psi}{\partial \theta_{i}}(\theta)\right).
	\label{eq:25}
\end{equation}
Take integrals for both sides of \eqref{eq:25} and notice the fact that $\int p(x,\theta)dx = 1$. The gradient of $\psi(\theta)$ can be computed as follows:
\begin{equation}\nonumber
	0 = \frac{\partial}{\partial \theta_{i}}\int p(x,\theta)dx = \int p(x,\theta)\left(F_{i}(x) - \frac{\partial \psi}{\partial \theta_{i}}(\theta)\right) dx,
\end{equation}
and 
\begin{equation}\nonumber
	\frac{\partial \psi}{\partial \theta_{i}}(\theta) = \int F_{i}(x)p(x,\theta)dx.
	\label{eq:28}
\end{equation}
Therefore,
\begin{equation}\nonumber
	\begin{aligned} 
	\frac{\partial^{2}\psi}{\partial \theta_{i}\partial \theta_{j}}(\theta) &= \int F_{i}(x)\frac{\partial p(x,\theta)}{\partial \theta_{j}}dx. \\
	&= \int \frac{1}{p(x,\theta)}\frac{\partial p(x,\theta)}{\partial \theta_{j}}(p(x,\theta)F_{i}(x))dx.
	\end{aligned} 
\end{equation}
In the meanwhile, according to \eqref{eq:25},
\begin{equation}
	p(x,\theta)F_{i}(x) = \frac{\partial p(x,\theta)}{\partial \theta_{i}} + p(x,\theta)\frac{\partial \psi}{\partial \theta_{i}}(\theta),
\end{equation}
and thus,
\begin{equation}\nonumber
	\begin{aligned}
		\frac{\partial^{2}\psi}{\partial \theta_{i}\partial \theta_{j}}(\theta) = &  \int \frac{1}{p(x,\theta)}\frac{\partial p(x,\theta)}{\partial \theta_{j}} \frac{\partial p(x,\theta)}{\partial \theta_{i}}dx \\
		&- \frac{\partial\psi}{\partial \theta_{i}}(\theta)\frac{\partial}{\partial \theta_{j}}\int p(x,\theta)dx \\
		=&\int \frac{1}{p(x,\theta)}\frac{\partial p(x,\theta)}{\partial \theta_{j}} \frac{\partial p(x,\theta)}{\partial \theta_{i}}dx = I_{ij}(\theta).
	\end{aligned}
\end{equation}
\end{proof} 

From \eqref{eq:28} in the proof of Theorem \ref{thm:1}, another set of parameters $\varrho = (\varrho_{1},\cdots,\varrho_{m})\in\mathbb{R}^{m}$ for the exponential family can be defined by the coordinate transformation
\begin{equation}\nonumber
	\varrho_{i} := \frac{\partial \psi}{\partial \theta_{i}}(\theta) = \mathbb{E}_{p(x,\theta)}[F_{i}(x)],\ i=1,\cdots,m,
\end{equation} 
which are called the \textbf{expectation parameters}. 

The transformation matrix between $\varrho$ and $\theta$:
\begin{equation}\nonumber
	\frac{\partial \varrho}{\partial \theta} = \left[\frac{\partial^{2}\psi}{\partial \theta_{i}\partial \theta_{j}}\right]_{ij} = I(\theta),
\end{equation}
is just the Fisher information matrix. In this way, the Fisher information matrix with respect to the expectation parameters $\varrho$ can be written as
\begin{equation}\nonumber
	\begin{aligned} 
	\tilde{I}(\varrho) &= E_{p(x,\varrho)}\left[\nabla_{\varrho} \log p(x,\varrho)\nabla_{\varrho}\log p(x,\varrho)^{\top}\right] \\
	&= E_{p(x,\varrho)}\left[\left(\frac{\partial\theta}{\partial \varrho}\right)^{\top}\nabla_{\theta}\log p(x,\theta)\nabla_{\theta}\log p(x,\theta)\left(\frac{\partial\theta}{\partial \varrho}\right)\right] \\
	&=\left(\frac{\partial\theta}{\partial \varrho}\right)^{\top}E_{p(x,\varrho)}\left[\nabla_{\theta}\log p(x,\theta)\nabla_{\theta}\log p(x,\theta)\right]\left(\frac{\partial\theta}{\partial \varrho}\right) \\
	&= I(\theta)^{-1} I(\theta) I(\theta)^{-1} = I(\theta)^{-1},
	\end{aligned} 
\end{equation}
which is the inverse of the Fisher information matrix with respect to the original parameters $\theta$. The relationship between the Fisher information matrix with respect to $\theta$ and $\varrho$ is useful in the derivation of the  natural gradient Gaussian filter introduced in the next section, because the Gaussian distribution (as an exponential distribution family) is parameterized by the expectation parameters (i.e., the mean and covariance matrix).

%
%
%

\section{Natural Gradient Filtering}\label{sec.ngf}

\begin{figure}
    \centering
    \includegraphics[width=0.99\linewidth]{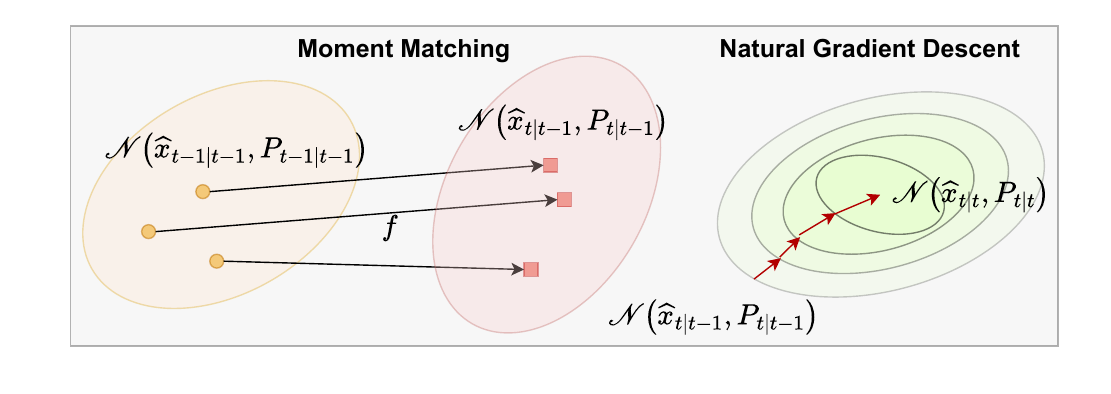}
    \caption{Illustration of NANO filter}
    \label{fig.NANO}
\end{figure}

\subsection{An Optimization Perspective on Gaussian Filtering}\label{sec.opt_gaussian_filtering}

The variational characterizations \eqref{eq.BF_prediction_optimization_rewrite}--\eqref{eq.BF_update_optimization_rewrite} interpret Bayesian filtering as two optimization problems over the infinite-dimensional space of probability densities. While conceptually clean, these problems are generally intractable because the optimizer is an arbitrary density function. 
Gaussian filtering can be viewed as a principled finite-dimensional relaxation: one restricts the candidate density to a parameterized family—most commonly the multivariate Gaussian family—thereby converting the original variational problems into optimization problems over a mean--covariance pair.

Formally, define the Gaussian family
\[
\mathcal{Q}_{\mathrm{G}}
\triangleq
\Bigl\{\,q(x)=\mathcal{N}(x;\hat{x},P)\ :\ \hat{x}\in\mathbb{R}^n,\ P\in\mathbb{S}^n_{++}\Bigr\},
\]
where $\mathbb{S}^n_{++}$ denotes the set of symmetric positive definite matrices.
A Gaussian filter may then be interpreted as constructing a computable trajectory
\[
t\ \mapsto\ q_t(\cdot)=\mathcal{N}(\cdot;\hat{x}_{t|t},P_{t|t})\in\mathcal{Q}_{\mathrm{G}},
\]
such that $q_t$ remains ``close'' to the true filtering posterior $p(x_t|y_{1:t})$ in an information-theoretic sense.
From the optimization viewpoint, this ``closeness'' is induced by KL divergences embedded in the variational formulations, which leads naturally to projection-like and proximal-like interpretations of the prediction and update steps.

Restricting the prediction variational problem \eqref{eq.BF_prediction_optimization_rewrite} to the Gaussian family yields
\begin{equation}\label{eq.gaussian_prediction_opt_review}
\begin{aligned}
&(\hat{x}_{t|t-1},P_{t|t-1})
\\
=&
\argmax_{\hat{x},P}\ 
\Exp_{p(x_{t-1}|y_{1:t-1})}\Exp_{p(x_t|x_{t-1})}
\Bigl[\log\mathcal{N}(x_t;\hat{x},P)\Bigr].
\end{aligned}
\end{equation}
Note that since
\[
\Exp_{p_{t|t-1}}[\log\mathcal{N}(x;\hat{x},P)]
=
-\ D_{\mathrm{KL}}\!\bigl(p_{t|t-1}\,\|\,\mathcal{N}(\hat{x},P)\bigr)\ +\ \text{const},
\]
the maximizer of \eqref{eq.gaussian_prediction_opt_review} is the forward-KL projection of the true prior onto $\mathcal{Q}_{\mathrm{G}}$. 
This is a key conceptual point: the Gaussian prediction step does not simply ``assume Gaussianity''; rather, it selects the Gaussian that is information-theoretically closest to the true prior under $D_{\mathrm{KL}}(p\|q)$.

Operationally, the projection admits a particularly simple characterization: the optimal Gaussian matches the first two moments of $p_{t|t-1}$.
This recovers the classical ``moment propagation'' viewpoint of many Gaussian filters.

\begin{lemma}[\cite{cao2024nonlinear}]\label{lem:stationary_pred_moment} Let $p(x)$ be a density on $\mathbb{R}^n$ with finite second moments. Consider the maximum expected Gaussian log-likelihood problem
\[
(\mu^\star,\Sigma^\star)
=
\argmax_{\mu,\Sigma\in\mathbb{S}^n_{++}}
\ \Exp_{p(x)}\bigl[\log\mathcal{N}(x;\mu,\Sigma)\bigr].
\]
Any stationary point satisfies
\begin{subequations}\nonumber
\begin{align}
\mu^\star &= \Exp_{p(x)}[x],\\
\Sigma^\star &= \Exp_{p(x)}\bigl[(x-\mu^\star)(x-\mu^\star)^\top\bigr]
=\Exp_{p(x)}[xx^\top]-\mu^\star{\mu^\star}^\top.
\end{align}
\end{subequations}
\end{lemma}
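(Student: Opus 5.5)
Write the objective explicitly and set its gradients to zero. Since $\log\mathcal{N}(x;\mu,\Sigma) = -\tfrac{n}{2}\log(2\pi) - \tfrac12\log\det\Sigma - \tfrac12 (x-\mu)^\top\Sigma^{-1}(x-\mu)$, the objective is
\[
J(\mu,\Sigma) = -\tfrac{n}{2}\log(2\pi) - \tfrac12\log\det\Sigma - \tfrac12\,\mathrm{tr}\!\left(\Sigma^{-1}\,\Exp_{p(x)}\bigl[(x-\mu)(x-\mu)^\top\bigr]\right).
\]
This is finite because $p$ has finite second moments. The interchange of expectation and trace is justified by linearity, so the only probabilistic input is the existence of $\Exp_p[x]$ and $\Exp_p[xx^\top]$.

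\textbf{Mean.} For any fixed $\Sigma\in\mathbb{S}^n_{++}$, differentiate under the expectation (legitimate, since the integrand is a quadratic polynomial in $x$ with integrable coefficients):
\[
\nabla_\mu J = \Sigma^{-1}\,\Exp_p[x-\mu].
\]
Because $\Sigma^{-1}$ is invertible, a stationary point forces $\mu^\star=\Exp_p[x]$, independently of $\Sigma$.

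\textbf{Covariance.} Use the standard matrix identities $\partial_\Sigma \log\det\Sigma = \Sigma^{-1}$ and $\partial_\Sigma\,\mathrm{tr}(\Sigma^{-1}M) = -\Sigma^{-1}M\Sigma^{-1}$ for symmetric $\Sigma$ and $M$. With $M = \Exp_p[(x-\mu^\star)(x-\mu^\star)^\top]$ this gives
\[
\nabla_\Sigma J = -\tfrac12\Sigma^{-1} + \tfrac12\Sigma^{-1}M\Sigma^{-1}.
\]
Setting this to zero and multiplying by $\Sigma$ on both sides yields $\Sigma^\star = M$. Expanding the outer product and substituting $\mu^\star=\Exp_p[x]$ gives the second form, $\Exp_p[xx^\top]-\mu^\star{\mu^\star}^\top$.

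\textbf{Main obstacle: the covariance derivative.} The step needing care is the constrained variable: $\Sigma$ ranges over the open set $\mathbb{S}^n_{++}$. To make the computation robust, I would not rely on unconstrained entrywise derivatives. Instead I would verify stationarity through directional derivatives along symmetric perturbations $\Sigma+\epsilon H$, with $H$ symmetric. This gives
\[
\tfrac{d}{d\epsilon}J\big|_{\epsilon=0} = \tfrac12\mathrm{tr}\bigl((\Sigma^{-1}M\Sigma^{-1}-\Sigma^{-1})H\bigr).
\]
This vanishes for all symmetric $H$ exactly when the symmetric matrix $\Sigma^{-1}M\Sigma^{-1}-\Sigma^{-1}$ is zero, because the trace pairing is nondegenerate on symmetric matrices.

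Optionally, I would remark that the stationary point is admissible and is the maximizer. Admissibility requires $M\succ0$, which holds when $p$ is a genuine density, since then $p$ is not supported on any affine hyperplane. For maximality, reparametrize by the precision $\Lambda=\Sigma^{-1}$. In $\Lambda$ the objective $\tfrac12\log\det\Lambda-\tfrac12\mathrm{tr}(\Lambda M)$ is strictly concave, so the stationary point is the unique maximizer.
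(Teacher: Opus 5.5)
The paper contains no proof of Lemma~\ref{lem:stationary_pred_moment}. It imports the result from \cite{cao2024nonlinear}, so there is no in-text argument to match against.

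Your proof is correct and complete for what the lemma asserts. You write the objective through the first two moments of $p$. The mean condition $\Sigma^{-1}(\Exp_p[x]-\mu)=0$ then gives $\mu^\star=\Exp_p[x]$. The covariance condition, taken via directional derivatives along symmetric $H$, gives $\Sigma^\star=M$, and the symmetric-$H$ device avoids any ambiguity about entrywise derivatives with respect to a symmetric matrix.

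A route closer to the paper's own machinery is the exponential-family one set up in its primer. Take $F(x)=(x,xx^\top)$ and natural parameter $\theta=(\Sigma^{-1}\mu,-\tfrac12\Sigma^{-1})$. The objective becomes $\langle\theta,\Exp_p[F]\rangle-\psi(\theta)$. By the computation in the proof of Theorem~\ref{thm:1}, its gradient is $\Exp_p[F]-\nabla\psi(\theta)=\Exp_p[F]-\Exp_{q_\theta}[F]$. Stationarity is then literally matching of the expectation parameters $\Exp[x]$ and $\Exp[xx^\top]$. Because $\nabla^2\psi=I(\theta)\succ0$, the objective is strictly concave in $\theta$, which yields uniqueness and global optimality at once. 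Stationary points correspond across the two coordinate systems since $(\mu,\Sigma)\mapsto\theta$ is a diffeomorphism. Your computation is more elementary and fully explicit. The exponential-family argument is coordinate-free and makes the surrounding text's reading of the prediction step as ``forward-KL projection equals moment matching'' immediate.

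One caution on your optional maximality remark. The objective is not jointly concave in $(\mu,\Lambda)$: for $n=1$, writing $m=\Exp_p[x]$ and $\Lambda=\lambda$, its Hessian determinant is $\tfrac{1}{2\lambda}-(m-\mu)^2$, which can be negative. So strict concavity of $\tfrac12\log\det\Lambda-\tfrac12\mathrm{tr}(\Lambda M)$ at the fixed mean $\mu^\star$ only gives a profile maximum. To close this, add the identity $\Exp_p[(x-\mu)(x-\mu)^\top]=M+(\mu-\mu^\star)(\mu-\mu^\star)^\top$. It gives $J(\mu,\Sigma)=J(\mu^\star,\Sigma)-\tfrac12(\mu-\mu^\star)^\top\Sigma^{-1}(\mu-\mu^\star)\le J(\mu^\star,\Sigma)$ for every $\Sigma$, and your concavity argument in $\Lambda$ then finishes the job. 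This lies beyond what the lemma claims, so it does not affect your proof of the statement itself.
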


Applying Lemma \ref{lem:stationary_pred_moment} with $p(x)=p(x_t|y_{1:t-1})$ yields the Gaussian prediction equations
\begin{subequations}\label{eq.gaussian_prediction_moment}
\begin{align}
\hat{x}_{t|t-1} &= \Exp_{p(x_t|y_{1:t-1})}[x_t],\\
P_{t|t-1} &= \Exp_{p(x_t|y_{1:t-1})}[x_tx_t^\top]-\hat{x}_{t|t-1}\hat{x}_{t|t-1}^\top.
\end{align}
\end{subequations}
For the state-space model \eqref{eq.SSM} with zero-mean process noise, \eqref{eq.gaussian_prediction_moment} reduces to computing Gaussian expectations of nonlinear functions, e.g.,
\[
\hat{x}_{t|t-1} = \Exp_{\mathcal{N}(x_{t-1};\hat{x}_{t-1|t-1},P_{t-1|t-1})}\!\bigl[f(x_{t-1})\bigr],
\]
together with the corresponding second-moment expression for $P_{t|t-1}$.
This is exactly the computational bottleneck addressed by unscented/cubature/quadrature moment transforms: they approximate integrals of the form $\Exp_{\mathcal{N}}[\varphi(x)]$ without requiring linearization of $f$.

Restricting the update variational problem \eqref{eq.BF_update_optimization_rewrite} to $\mathcal{Q}_{\mathrm{G}}$ gives
\begin{equation}\label{eq.gaussian_update_opt_review}
\begin{aligned}
&(\hat{x}_{t|t},P_{t|t})
=
\argmin_{\hat{x},P\in\mathbb{S}^n_{++}}
\Bigl\{
\Exp_{\mathcal{N}(x_t;\hat{x},P)}\bigl[\ell(x_t,y_t)\bigr]
\\
&+
D_{\mathrm{KL}}\!\bigl(\mathcal{N}(x_t;\hat{x},P)\ \|\ \mathcal{N}(x_t;\hat{x}_{t|t-1},P_{t|t-1})\bigr)
\Bigr\},
\end{aligned}
\end{equation}
where $\ell(x_t,y_t)\triangleq-\log p(y_t|x_t)$ is the measurement-dependent loss.
Problem \eqref{eq.gaussian_update_opt_review} makes the algorithmic role of the prior explicit: the KL term acts as an information-theoretic regularizer that keeps the updated Gaussian close to the predicted Gaussian, while the expected loss term encourages consistency with the new measurement.
In this sense, the update step is a distributional proximal step: it trades off data fit against deviation from the prediction distribution.

Expanding the Gaussian--Gaussian KL divergence yields an explicit finite-dimensional objective:
\begin{equation}\label{eq.explicit_J_long}
\begin{aligned}
J(\hat{x},P)
=&\ \Exp_{\mathcal{N}(x;\hat{x},P)}\bigl[\ell(x,y_t)\bigr]\\
&+\frac{1}{2}(\hat{x}-\hat{x}_{t|t-1})^\top P_{t|t-1}^{-1}(\hat{x}-\hat{x}_{t|t-1})
\\
&+\frac{1}{2}\mathrm{Tr}\!\left(P_{t|t-1}^{-1}P\right)
-\frac{1}{2}\log\frac{|P|}{|P_{t|t-1}|}-\frac{n}{2}.
\end{aligned}
\end{equation}
Compared with prediction, the update step is harder for a fundamental reason: the expectations in $J(\hat{x},P)$ are taken under the \emph{unknown} posterior Gaussian itself. Therefore, even when $\ell$ is smooth, the optimality conditions typically become implicit fixed-point equations coupling $\hat{x}_{t|t}$ and $P_{t|t}$.

To state the stationary conditions cleanly, we adopt the standard smoothness assumption.

\begin{assumption}\label{assump:smooth_loss_review}
For fixed $y_t$, the loss $\ell(\cdot,y_t)$ is twice continuously differentiable and the required expectations under Gaussian measures exist.
\end{assumption}

Under Assumption \ref{assump:smooth_loss_review}, one may use Stein-type identities (Bonnet's \cite{bonnet1964transformations}
and Price's \cite{price1958useful}
theorems) to differentiate Gaussian expectations with respect to their mean and covariance parameters. This produces compact expressions for the gradients of \eqref{eq.explicit_J_long}, and hence its stationary conditions.

\begin{lemma}\label{lem:stationary_update_fixedpoint}
Under Assumption \ref{assump:smooth_loss_review}, any stationary point $(\hat{x}_{t|t},P_{t|t})$ of \eqref{eq.gaussian_update_opt_review} satisfies the coupled equations
\begin{subequations}\label{eq.stationary_update_fixedpoint}
\begin{align}
\hat{x}_{t|t}
&=
\hat{x}_{t|t-1}
-
P_{t|t-1}\,
\Exp_{\mathcal{N}(x_t;\hat{x}_{t|t},P_{t|t})}\!\bigl[\nabla_{x_t}\ell(x_t,y_t)\bigr],
\label{eq.stationary_update_mean}\\
P_{t|t}^{-1}
&=
P_{t|t-1}^{-1}
+
\Exp_{\mathcal{N}(x_t;\hat{x}_{t|t},P_{t|t})}\!\bigl[\nabla_{x_t}^2\ell(x_t,y_t)\bigr].
\label{eq.stationary_update_cov}
\end{align}
\end{subequations}
\end{lemma}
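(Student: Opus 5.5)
We differentiate the explicit objective $J(\hat{x},P)$ in \eqref{eq.explicit_J_long} with respect to $\hat{x}$ and $P$ and set both derivatives to zero. The KL terms are elementary. The only nontrivial part is the expected loss $L(\hat{x},P)\triangleq\Exp_{\mathcal{N}(x;\hat{x},P)}[\ell(x,y_t)]$, which we handle with Bonnet's and Price's theorems. Under Assumption \ref{assump:smooth_loss_review} they give
\[
\nabla_{\hat{x}} L = \Exp_{\mathcal{N}(x;\hat{x},P)}\bigl[\nabla_x \ell(x,y_t)\bigr],\qquad
\nabla_{P} L = \tfrac{1}{2}\,\Exp_{\mathcal{N}(x;\hat{x},P)}\bigl[\nabla_x^2 \ell(x,y_t)\bigr].
\]
These identities can be verified directly. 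Write $x=\hat{x}+P^{1/2}z$ with $z\sim\mathcal{N}(0,I)$ and differentiate under the integral. Alternatively, differentiate the Gaussian density, using $\nabla_{\hat{x}}\mathcal{N}=-\nabla_x\mathcal{N}$ and $\nabla_P\mathcal{N}=\tfrac12\nabla_x^2\mathcal{N}$ (the heat-equation identity), and then integrate by parts. The boundary terms vanish because of Gaussian decay and the assumed existence of the expectations.

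The mean condition follows next. The gradient of the quadratic term is $P_{t|t-1}^{-1}(\hat{x}-\hat{x}_{t|t-1})$, so stationarity in $\hat{x}$ reads
\[
P_{t|t-1}^{-1}(\hat{x}-\hat{x}_{t|t-1}) + \Exp_{\mathcal{N}(x;\hat{x},P)}[\nabla_x\ell]=0 .
\]
Multiplying by $P_{t|t-1}$ and evaluating at the stationary point $(\hat{x}_{t|t},P_{t|t})$ gives \eqref{eq.stationary_update_mean}.

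For the covariance condition, we treat $P$ as a symmetric matrix variable in the open set $\mathbb{S}^n_{++}$, so stationarity means the (symmetric) gradient vanishes. We use $\nabla_P \mathrm{Tr}(P_{t|t-1}^{-1}P)=P_{t|t-1}^{-1}$ and $\nabla_P\log|P|=P^{-1}$. Stationarity in $P$ then reads
\[
\tfrac12\Exp[\nabla_x^2\ell] + \tfrac12 P_{t|t-1}^{-1} - \tfrac12 P^{-1}=0,
\]
which rearranges to \eqref{eq.stationary_update_cov}. All matrices involved are symmetric, so the symmetrization convention for matrix derivatives introduces no extra factors. Because $\mathbb{S}^n_{++}$ is open, every stationary point is an interior critical point and no constraint multipliers appear.

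The main obstacle is justifying Price's theorem, in particular the factor $\tfrac12$ and the interchange of differentiation and integration. This is where Assumption \ref{assump:smooth_loss_review} is used: the existence of the Gaussian expectations of $\nabla\ell$ and $\nabla^2\ell$, locally uniformly in $(\hat{x},P)$, licenses dominated convergence. Once these two gradient identities are in hand, the rest is routine matrix calculus.
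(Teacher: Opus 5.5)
Your proposal is correct and follows the route the paper indicates. You compute the gradients of the explicit objective \eqref{eq.explicit_J_long} using Bonnet's theorem ($\nabla_{\hat{x}}\Exp[\ell]=\Exp[\nabla_x\ell]$) and Price's theorem ($\nabla_{P}\Exp[\ell]=\tfrac12\Exp[\nabla_x^2\ell]$), add the elementary KL gradients, and set both to zero; the paper only sketches this, and your remarks on the symmetric-matrix gradient and on stationary points being interior to $\mathbb{S}^n_{++}$ supply the details it leaves implicit.
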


Equations \eqref{eq.stationary_update_fixedpoint} are generally not solvable in closed form because both expectations are evaluated under $\mathcal{N}(\hat{x}_{t|t},P_{t|t})$, which is precisely the unknown to be determined.
In other words, the Gaussian posterior is characterized as a fixed point of a map that depends on the score $\nabla\ell$ and the curvature $\nabla^2\ell$ averaged under the posterior approximation.
This explains why many nonlinear Gaussian filters resort to further approximations: one must either (a) approximate the expectations, (b) approximate $\ell$ by a quadratic surrogate, or (c) solve the fixed point iteratively.

\subsection{Natural Gradient Gaussian Approximation}\label{sec:NANO}

The prediction step in \eqref{eq.gaussian_prediction_opt_review} admits an exact solution within the Gaussian family: the forward-KL projection is achieved by moment matching (Lemma~\ref{lem:stationary_pred_moment}). 
In contrast, the update step \eqref{eq.gaussian_update_opt_review} is substantially more challenging, because the optimality conditions \eqref{eq.stationary_update_fixedpoint} are implicit and couple $(\hat{x}_{t|t},P_{t|t})$ through expectations taken under the \emph{unknown} posterior Gaussian. 
Existing Gaussian filters often resolve this difficulty by linearization (or local quadratic surrogates), which introduces approximation errors in locating the stationary point.
Motivated by information-geometric optimization, we instead seek to solve \eqref{eq.gaussian_update_opt_review} by an iterative \emph{natural gradient} scheme on the Gaussian statistical manifold.

We stack the mean and the \emph{information-form} covariance parameter into a single vector
\begin{equation}\label{eq:NANO_stack_def}
v \triangleq 
\begin{bmatrix}
\hat{x}_{t|t}\\
\mathrm{vec}(P_{t|t}^{-1})
\end{bmatrix},
\; 
\frac{\partial}{\partial v}J(\hat{x}_{t|t},P_{t|t})
=
\begin{bmatrix}
\frac{\partial}{\partial \hat{x}_{t|t}}J(\hat{x}_{t|t},P_{t|t})\\[2pt]
\mathrm{vec}\!\left(\frac{\partial}{\partial P_{t|t}^{-1}}J(\hat{x}_{t|t},P_{t|t})\right)
\end{bmatrix},
\end{equation}
where we optimize over $P_{t|t}^{-1}$ rather than $P_{t|t}$. 
This choice is aligned with the information-filter representation and frequently yields simpler expressions for update rules.

Let $v^{(i)}$ denote the parameters at iteration $i$, and define
\[
\delta v \triangleq v^{(i+1)}-v^{(i)}
=
\begin{bmatrix}
\hat{x}_{t|t}^{(i+1)}-\hat{x}_{t|t}^{(i)}\\[2pt]
\mathrm{vec}\!\bigl((P_{t|t}^{-1})^{(i+1)}-(P_{t|t}^{-1})^{(i)}\bigr)
\end{bmatrix}.
\]
The natural gradient update is
\begin{equation}\label{eq:NANO_natgrad_generic}
\delta v
=
-
\left[
\mathcal{F}_v^{-1}\,
\frac{\partial}{\partial v}J(\hat{x}_{t|t},P_{t|t})
\right]_{v=v^{(i)}},
\end{equation}
where $\mathcal{F}_v$ is the Fisher information matrix of $\mathcal{N}(x_t;\hat{x}_{t|t},P_{t|t})$ under the chosen parameterization.

\begin{proposition}\label{prop:NANO_fisher}
For $\mathcal{N}(x_t;\hat{x}_{t|t},P_{t|t})$ parameterized by $v=[\hat{x}_{t|t}^\top,\ \mathrm{vec}(P_{t|t}^{-1})^\top]^\top$, the inverse Fisher information matrix is
\begin{equation}\label{eq:NANO_fisher_inv}
\mathcal{F}_v^{-1}
=
\begin{bmatrix}
P_{t|t} & 0\\
0 & 2\,(P_{t|t}^{-1}\otimes P_{t|t}^{-1})
\end{bmatrix},
\end{equation}
where $\otimes$ denotes the Kronecker product.
\end{proposition}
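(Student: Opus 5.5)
The plan is to compute the Fisher information matrix $\mathcal{F}_v$ directly from the score-function definition \eqref{eq:13} and then invert it blockwise. Write $\Lambda\triangleq P_{t|t}^{-1}$ and $\mu\triangleq\hat{x}_{t|t}$, so that
\[
\log\mathcal{N}(x;\mu,\Lambda^{-1})=\tfrac12\log|\Lambda|-\tfrac12(x-\mu)^\top\Lambda(x-\mu)+\text{const}.
\]
The scores are
\[
\nabla_\mu\log p=\Lambda(x-\mu),\qquad
\nabla_{\Lambda}\log p=\tfrac12\Lambda^{-1}-\tfrac12(x-\mu)(x-\mu)^\top .
\]
Here I treat the entries of $\Lambda$ as free coordinates, since the statement is phrased via $\mathrm{vec}$. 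I then use the second form of \eqref{eq:13}, minus the expected Hessian, because it avoids fourth moments for the $\Lambda$-block.

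\textbf{Mean block.} The Hessian of $\log p$ with respect to $\mu$ is $-\Lambda$, so the mean block of $\mathcal{F}_v$ is $\Lambda=P_{t|t}^{-1}$.

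\textbf{Cross block.} Differentiating $\Lambda(x-\mu)$ with respect to $\mathrm{vec}(\Lambda)$ gives a term linear in $x-\mu$. Its expectation under $\mathcal{N}(\mu,\Lambda^{-1})$ vanishes, so the off-diagonal blocks are zero.

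\textbf{Precision block.} The only $\Lambda$-dependence in the second derivative comes from $\tfrac12\log|\Lambda|$, since the quadratic term is linear in $\Lambda$. The differential of $\tfrac12\Lambda^{-1}$ is $-\tfrac12\Lambda^{-1}(d\Lambda)\Lambda^{-1}$. Vectorizing with $\mathrm{vec}(AXB)=(B^\top\otimes A)\mathrm{vec}(X)$ gives
\[
\frac{\partial^2\log p}{\partial\,\mathrm{vec}(\Lambda)\,\partial\,\mathrm{vec}(\Lambda)^\top}=-\tfrac12\,\Lambda^{-1}\otimes\Lambda^{-1}.
\]
This is deterministic, so the precision block of $\mathcal{F}_v$ is $\tfrac12(P_{t|t}\otimes P_{t|t})$.

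\textbf{Inversion.} Using the block-diagonal structure and $(A\otimes B)^{-1}=A^{-1}\otimes B^{-1}$, inverting yields $P_{t|t}$ for the mean block and $2(P_{t|t}^{-1}\otimes P_{t|t}^{-1})$ for the precision block, which is \eqref{eq:NANO_fisher_inv}. As a consistency check, one could alternatively note that $(\mu,\Lambda)$ is related to the natural parameters $(\Lambda\mu,-\tfrac12\Lambda)$ and apply Theorem~\ref{thm:1} together with the Jacobian transformation rule. I would mention this only as a check, since the direct computation is shorter.

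\textbf{Main obstacle.} The delicate point is the symmetry of $\Lambda$. The $n^2$ entries of $\mathrm{vec}(\Lambda)$ are not independent coordinates on $\mathbb{S}^n_{++}$, so strictly the Fisher matrix in these coordinates is singular on the full $n^2$-space. I would handle this explicitly by working with the unconstrained extension: I would state that derivatives are taken treating all entries as free, the convention used for $\partial J/\partial P_{t|t}^{-1}$ in \eqref{eq:NANO_stack_def}. Under this convention, applying $\mathcal{F}_v^{-1}$ to a symmetric gradient $\mathrm{vec}(G)$ returns $2\,\mathrm{vec}(\Lambda^{-1}G\Lambda^{-1})$, which is again symmetric. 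The update $\delta v$ therefore stays in the space of symmetric matrices, and the formula is consistent with the duplication-matrix treatment restricted to symmetric directions.
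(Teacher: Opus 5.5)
The paper states Proposition~\ref{prop:NANO_fisher} without proof. It only gestures, in Section~\ref{sec.background}, at an exponential-family route via Theorem~\ref{thm:1} and the duality $\tilde I(\varrho)=I(\theta)^{-1}$. Your direct computation is correct. With your $\Lambda=P_{t|t}^{-1}$, the three blocks all check out:
the mean block is $\Lambda$;
the cross block vanishes, since its integrand is linear in $x-\mu$;
the precision block is $\tfrac12(P_{t|t}\otimes P_{t|t})$.
Blockwise inversion then gives \eqref{eq:NANO_fisher_inv}.

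The route the paper hints at would instead take the Hessian of the log-partition in natural coordinates $(\Lambda\mu,-\tfrac12\Lambda)$, i.e.\ the covariance of $(x,xx^\top)$, which involves fourth moments. It would then pull this back through the Jacobian of $v\mapsto(\Lambda\mu,-\tfrac12\Lambda)$. Because $\Lambda\mu$ couples $\mu$ and $\Lambda$, the off-diagonal block there vanishes only after a cancellation. That route ties the result to the primer's general framework and makes reparameterization invariance explicit, but it is longer. Moreover, $v$ is neither the natural nor the expectation parameterization (the latter is $(\mu,\mu\mu^\top+P_{t|t})$, so the primer's remark that mean and covariance are the expectation parameters is loose). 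The duality $\tilde I(\varrho)=I(\theta)^{-1}$ therefore gives no shortcut, and your computation sidesteps all of this.

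There are two small corrections. First, in your last paragraph, $2(P_{t|t}^{-1}\otimes P_{t|t}^{-1})\mathrm{vec}(G)=2\,\mathrm{vec}(\Lambda G\Lambda)$, not $2\,\mathrm{vec}(\Lambda^{-1}G\Lambda^{-1})$. It is $\Lambda G\Lambda$ that reproduces \eqref{eq:NANO_iter_updates_general}.

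Second, your Hessian does not come from the free-entry convention you invoke. If all $n^2$ entries were genuinely free, the gradient of $\log|\Lambda|$ would be $\Lambda^{-\top}$, and its Hessian would be $-K_{nn}(\Lambda^{-1}\otimes\Lambda^{-1})$, with $K_{nn}$ the commutation matrix. The score-outer-product form of \eqref{eq:13} would instead give the singular matrix $\tfrac14(I+K_{nn})(P_{t|t}\otimes P_{t|t})$. The two forms disagree because the extension to nonsymmetric $\Lambda$ is not a normalized model.

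Your $\tfrac12(P_{t|t}\otimes P_{t|t})$ is what both forms reduce to on symmetric directions $H$. There $K_{nn}$ acts as the identity and the metric is $\tfrac12\mathrm{tr}(P_{t|t}HP_{t|t}H)$. So the precise claim is that $2(\Lambda\otimes\Lambda)$ preserves $\{\mathrm{vec}(H):H=H^\top\}$ and inverts the Fisher metric on it. Your final paragraph essentially reaches this conclusion; just state it that way rather than attributing the Hessian formula to the free-entry convention.
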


Combining \eqref{eq:NANO_natgrad_generic}--\eqref{eq:NANO_fisher_inv} and rewriting the vectorized expressions back in matrix form yields the iterative scheme

\begin{subequations}\label{eq:NANO_iter_updates_general}
\small{
\begin{align}
(P_{t|t}^{-1})^{(i+1)}
&=
(P_{t|t}^{-1})^{(i)}
-
2\,(P_{t|t}^{-1})^{(i)}
\left.\frac{\partial}{\partial P_{t|t}^{-1}}J(\hat{x}_{t|t},P_{t|t})\right|_{v^{(i)}}
(P_{t|t}^{-1})^{(i)},\\
\hat{x}_{t|t}^{(i+1)}
&=
\hat{x}_{t|t}^{(i)}
-
P_{t|t}^{(i+1)}
\left.\frac{\partial}{\partial \hat{x}_{t|t}}J(\hat{x}_{t|t},P_{t|t})\right|_{v^{(i)}}.
\end{align}
}
\end{subequations}

Using the explicit objective \eqref{eq.explicit_J_long} and the standard identities for differentiating Gaussian expectations (Bonnet's and Price's theorems), one obtains the following natural-gradient iteration:
\begin{subequations}\label{eq:NANO_natgrad_update_1}
\begin{align}
(P_{t|t}^{-1})^{(i+1)}
=&
P_{t|t-1}^{-1}
+
\Exp_{\mathcal{N}(x_t;\hat{x}_{t|t}^{(i)},P_{t|t}^{(i)})}
\!\left[\nabla_{x_t}^2\ell(x_t,y_t)\right],\label{eq:NANO_natgrad_cov}\\
\hat{x}_{t|t}^{(i+1)}
=&
\hat{x}_{t|t}^{(i)}
-
P_{t|t}^{(i+1)}
\Exp_{\mathcal{N}(x_t;\hat{x}_{t|t}^{(i)},P_{t|t}^{(i)})}
\!\left[\nabla_{x_t}\ell(x_t,y_t)\right]
\\
&-
P_{t|t}^{(i+1)}P_{t|t-1}^{-1}\bigl(\hat{x}_{t|t}^{(i)}-\hat{x}_{t|t-1}\bigr).
\label{eq:NANO_natgrad_mean}
\end{align}
\end{subequations}
Unlike one-shot linearization updates, \eqref{eq:NANO_natgrad_update_1} directly targets the stationary point of \eqref{eq.gaussian_update_opt_review} through a principled descent direction defined by the Fisher metric.

A practical obstacle in \eqref{eq:NANO_natgrad_update_1} is the need to compute derivatives of $\ell(x_t,y_t)$.
To avoid this, we invoke Stein-type identities to rewrite the required expectations in terms of \emph{loss-weighted moments} under the current Gaussian iterate:
\begin{subequations}\label{eq:NANO_derivative_free_id}
\begin{align}
\Exp_{\mathcal{N}}\!\left[\nabla_{x_t}\ell(x_t,y_t)\right]
&=
(P_{t|t}^{-1})^{(i)}
\Exp_{\mathcal{N}}
\!\left[e_{t|t}^{(i)}\,\ell(x_t,y_t)\right],\\
\begin{split}
\Exp_{\mathcal{N}}\!\left[\nabla_{x_t}^2\ell(x_t,y_t)\right]
&=
(P_{t|t}^{-1})^{(i)}
\Exp_{\mathcal{N}}
\!\left[e_{t|t}^{(i)} (e_{t|t}^{(i)})^\top\,\ell(x_t,y_t)\right]\\
&\quad-(P_{t|t}^{-1})^{(i)}\Exp_{\mathcal{N}}\!\left[\ell(x_t,y_t)\right],
\end{split}
\end{align}
\end{subequations}
where $\mathcal{N}$ is shorthand for $\mathcal{N}(x_t;\hat{x}_{t|t}^{(i)},P_{t|t}^{(i)})$ and $e_{t|t}^{(i)} = x_t-\hat{x}_{t|t}^{(i)}$.
Substituting \eqref{eq:NANO_derivative_free_id} into \eqref{eq:NANO_natgrad_update_1} yields a derivative-free update:

\begin{subequations}\label{eq:NANO_derivative_free_update}

\begin{align}
(P_{t|t}^{-1})^{(i+1)}
=&\ 
(P_{t|t}^{-1})^{(i)}
\Exp_{\mathcal{N}}\!\left[e_{t|t}^{(i)} \bigl(e_{t|t}^{(i)}\bigr)^\top\,\ell(x_t,y_t)\right]
(P_{t|t}^{-1})^{(i)}
\nonumber\\
&\ 
-
(P_{t|t}^{-1})^{(i)}\Exp_{\mathcal{N}}\!\left[\ell(x_t,y_t)\right] + P_{t|t-1}^{-1},
\\
\hat{x}_{t|t}^{(i+1)}
=&\ 
\hat{x}_{t|t}^{(i)}
-
P_{t|t}^{(i+1)}(P_{t|t}^{-1})^{(i)}
\Exp_{\mathcal{N}}\!\left[\bigl(e_{t|t}^{(i)}\bigr)\,\ell(x_t,y_t)\right] \nonumber
\\
&-
P_{t|t}^{(i+1)}P_{t|t-1}^{-1}\bigl(\hat{x}_{t|t}^{(i)}-\hat{x}_{t|t-1}\bigr).
\end{align}
\end{subequations}

The NANO filter algorithm is summarized in Algorithm \ref{alg.1} and Fig. \ref{fig.NANO} respectively.

\begin{algorithm}[t]
\caption{NANO Filter}\label{alg.1}
\begin{algorithmic} 
    \State \textbf{Input:} Stopping thresohold $\gamma$
    \State \textbf{Initialization:} State estimate $\hat{x}_{0|0}$ and covariance $P_{0|0}$
    \For{each time step $t$}
        \State \textbf{Predict:}
        \State Calculate predicted state mean $\hat{x}_{t|t-1}$ and covariance $P_{t|t-1}$ using \eqref{eq.gaussian_prediction_moment}
        \State \textbf{Update:}
        \State Obtain the noisy measurement $y_t$ 
        \State  Initialize the state estimate $\hat{x}_{t}^{(0)}$ and covariance $P_{t}^{(0)}$
        \For{each iteration number $i$} 
        \State {Update state estimate and covariance using  \eqref{eq:NANO_natgrad_update_1}}
        \EndFor
        \State 
        $
        \hat{x}_{t|t} = \hat{x}_t^{(i)}, P_{t|t} = P_t^{(i)}
        $
    \EndFor
\end{algorithmic}
\end{algorithm}

\subsection{Connection to Kalman Filter}\label{sec:connection_kf}

We now show that, for linear Gaussian measurement models, one iteration of the natural-gradient Gaussian update
\eqref{eq:NANO_natgrad_update_1} recovers exactly the Kalman filter (KF) measurement update.

Consider the standard linear observation model
\[
p(y_t\mid x_t)=\mathcal{N}(y_t; Cx_t, R),
\]
whose negative log-likelihood (up to an additive constant independent of $x_t$) is
\[
\ell(x_t,y_t)=\tfrac12(y_t-Cx_t)^\top R^{-1}(y_t-Cx_t).
\]
Its first- and second-order derivatives are
\begin{align*}
\nabla_{x_t}\ell(x_t,y_t)&=C^\top R^{-1}(Cx_t-y_t),\\
\nabla^2_{x_t}\ell(x_t,y_t)&=C^\top R^{-1}C.
\end{align*}
Importantly, $\nabla^2_{x_t}\ell$ is constant in $x_t$, while $\nabla_{x_t}\ell$ is affine in $x_t$.
 
Substituting $\nabla^2_{x_t}\ell(x_t,y_t)=C^\top R^{-1}C$ into the NANO covariance update
\eqref{eq:NANO_natgrad_cov} shows that the expectation is trivial under any Gaussian iterate:
\[
\Exp_{\mathcal{N}(x_t;\hat{x}_{t|t}^{(0)},P_{t|t}^{(0)})}\!\left[\nabla_{x_t}^2\ell(x_t,y_t)\right]
=
C^\top R^{-1}C.
\]
Therefore, after one iteration,
\begin{equation}\label{eq:kf_equiv_info_cov}
(P_{t|t}^{-1})^{(1)} = P_{t|t-1}^{-1} + C^\top R^{-1}C.
\end{equation}
Equation \eqref{eq:kf_equiv_info_cov} is exactly the information-form KF covariance update,
hence $P_{t|t}^{(1)}=P_{t|t}$ regardless of the initialization $P_{t|t}^{(0)}$.
 
Next, since $\nabla_{x_t}\ell(x_t,y_t)=C^\top R^{-1}(Cx_t-y_t)$ is affine in $x_t$,
its expectation under $\mathcal{N}(x_t;\hat{x}_{t|t}^{(0)},P_{t|t}^{(0)})$ reduces to evaluating at the mean:
\[
\Exp_{\mathcal{N}(x_t;\hat{x}_{t|t}^{(0)},P_{t|t}^{(0)})}\!\left[\nabla_{x_t}\ell(x_t,y_t)\right]
=
C^\top R^{-1}(C\hat{x}_{t|t}^{(0)}-y_t).
\]
Plugging this and $P_{t|t}^{(1)}=P_{t|t}$ into the NANO mean update \eqref{eq:NANO_natgrad_mean} (with $i=0$) yields
\begin{equation}\label{eq:kf_equiv_mean_step1}
\hat{x}_{t|t}^{(1)}
=
\hat{x}_{t|t}^{(0)}
+
P_{t|t}C^\top R^{-1}(y_t-C\hat{x}_{t|t}^{(0)})
-
P_{t|t}P_{t|t-1}^{-1}(\hat{x}_{t|t}^{(0)}-\hat{x}_{t|t-1}).
\end{equation}
Using the information-form identity implied by \eqref{eq:kf_equiv_info_cov},
\[
P_{t|t}P_{t|t-1}^{-1}
=
P_{t|t}\bigl(P_{t|t}^{-1}-C^\top R^{-1}C\bigr)
=
I - P_{t|t}C^\top R^{-1}C,
\]
one can simplify \eqref{eq:kf_equiv_mean_step1} and eliminate the dependence on $\hat{x}_{t|t}^{(0)}$, obtaining
\begin{equation}\label{eq:kf_equiv_mean_step2}
\hat{x}_{t|t}^{(1)}
=
\hat{x}_{t|t-1}
+
P_{t|t}C^\top R^{-1}\bigl(y_t-C\hat{x}_{t|t-1}\bigr).
\end{equation}
Finally, by the matrix inversion lemma, the factor multiplying the innovation can be rewritten as the Kalman gain:
\[
P_{t|t}C^\top R^{-1}
=
P_{t|t-1}C^\top\bigl(CP_{t|t-1}C^\top+R\bigr)^{-1}
\triangleq K_t.
\]
Substituting into \eqref{eq:kf_equiv_mean_step2} gives the standard KF mean update.

For the linear Gaussian system, the natural-gradient update
\eqref{eq:NANO_natgrad_update_1} reaches the unique stationary point of \eqref{eq.gaussian_update_opt_review} in one iteration.
Equivalently, a single NANO update (i.e., $i=0\to1$) produces exactly the KF posterior $(\hat{x}_{t|t},P_{t|t})$,
independently of the initialization $(\hat{x}_{t|t}^{(0)},P_{t|t}^{(0)})$.

\subsection{Practical Implementation}
To transition from the theoretical derivation of the NANO filter to its actual deployment in dynamical systems, several numerical aspects must be addressed. This section details the practical realization of the algorithm, focusing on two critical components: the numerical computation of Gaussian integrals in expectation calculation and the enforcement of covariance positive definiteness to ensure numerical stability during iterative updates.

\subsubsection{Expectation Calculation}
The moment matching (MM) technique is an important computational tool used in expectation calculation. It approximates the mean \(\mu^\prime\) and covariance \(\Sigma^\prime\) of a random variable \(x \sim \mathcal{N}(\mu,\Sigma)\) after passing through a nonlinear function \(f(\cdot)\), which is concisely expressed as $\{\mu', \Sigma'\} = \mathrm{MM}(\mu, \Sigma; f(\cdot))$.
Its general computational framework is
\begin{equation}
\label{eq:moment_matching}
\begin{aligned}
\nonumber
\mu^\prime &= \sum_{i=0}^{N-1} w_{m}^i\,f(\chi_i),\\
\Sigma^\prime &= \sum_{i=0}^{N-1} w_{c}^i\!\left[f(\chi_i)-\mu^\prime\right]\!\left[f(\chi_i)-\mu^\prime\right]^{\top}, 
\end{aligned}
\end{equation}
where \(\chi_i\) is the collocation point and \(w_m^i\) and \(w_c^i\) is the weighting coefficent. Depending on how collocation point and weighting coefficent are constructed, MM can be implemented in different forms, such as unscented transformation \cite{julier1995new,julier2004unscented}, Gauss–Hermite quadrature \cite{arasaratnam2007discrete} or spherical–radial cubature \cite{arasaratnam2009cubature}, etc.

Taking the unscented transformation as a representative example, the collocation points are explicitly defined as:
\begin{equation}
\nonumber
\chi_0 = \mu, \quad \chi_i = \mu \pm \sqrt{(n + \lambda) \Sigma}_i , \quad  i = 1, \dots, n,
\end{equation}
and the corresponding weighting coefficients are given by
\begin{equation}
\nonumber
\begin{aligned}
 w_m^0 &= \frac{\lambda}{n + \lambda}, \quad w_c^0 = w_m^0 + (1 - \alpha^2 + \beta)
 \\
 w_m^i &= w_c^i =  \frac{1}{2(n + \lambda)},
 \quad
 i = 1, \dots, 2n.
 \end{aligned}    
\end{equation}
where $n$ is the state dimension, $\lambda$ is a composite scaling parameter, and \( \sqrt{(n + \lambda) \Sigma} \) is the Cholesky decomposition of \( (n + \lambda) \Sigma \). The parameters \( \alpha \), \( \beta \), and \( \lambda \) control the spread and weighting of the sigma points, with common defaults being \( \alpha = 0 \), \( \beta = 1 \), and \( \lambda = 0 \) for the so-called Julier's sigma points \cite{julier1995new}.

\subsubsection{Positive Definiteness Guarantee}
During the iterative update process in \eqref{eq:NANO_natgrad_cov}, the covariance matrix is not guaranteed to remain positive definite. This stems from the structure of the covariance update rule: while the prior covariance \(P_{t|t-1}\) is strictly positive definite, the Hessian term involved in the update introduces indefinite components. Specifically, the log-likelihood Hessian takes the form
\begin{equation}
\begin{aligned}
\label{eq.hessian}
\nabla_x^2 \ell(x, y_t) = G^\top R^{-1}G - \nabla_x^2g(x)^\top R^{-1}(y_t - g(x)),
\end{aligned}
\end{equation}
where \(G = \nabla_xg(x)\) is the Jacobian of the measurement function. The first term \(G^\top R^{-1} G\) is always positive semi-definite. In contrast, the second term depends on the measurement residual \((y_t - g(x))\) and the Hessian of the measurement function \(\nabla_x^2g(x)\), which can take either positive or negative values. As a result, the full Hessian is generally indefinite. Therefore, using the iteration in \eqref{eq:NANO_natgrad_cov} may cause the covariance matrix to become non-positive definite at certain time instant, leading to the divergence of the NANO filter. To address this problem, we introduce two methods that guarantee the positive definiteness of the covariance updates \cite{Zhang2026NaturalGG}.

\textbf{Hessian Approximation.}
The first approach is to directly approximate the Hessian \eqref{eq.hessian} as a positive semi-definite matrix. We define the normalized measurement residual as \(r(x,y_t)= R^{-1/2}(y_t - g(x))\). With this representation, the log-likelihood can be transformed into a least-squares form as \(\ell(x,y_t) = \frac{1}{2}\|r(x,y_t)\|_2^2 \). Therefore, we have \(\ell(x+\Delta x,y_t) = \frac{1}{2}\|r(x+\Delta x,y_t)\|_2^2 \), and then performing a Taylor expansion on both sides of the equation, we have
\begin{equation}\nonumber
\begin{aligned}
\ell(x+\Delta x,y_t) \approx& \ell(x,y_t) + \frac{\partial \ell}{\partial x}^\top\Delta x + \frac{1}{2}\Delta x^\top\frac{\partial^2 \ell}{\partial x^2}\Delta x, \\
\frac{1}{2}\|r(x+\Delta x,y_t)\|_2^2 \approx& \frac{1}{2}\|r(x,y_t) + J(x)\Delta x\|_2^2 \\
=& \frac{1}{2}\|r(x,y_t)\|_2^2 + r(x,y_t)^\top J(x)\Delta x \\
& + \frac{1}{2}\Delta x^\top J(x)^\top J(x)\Delta x,
\end{aligned}
\end{equation}
where \(J(x) = {\partial r}/{\partial x} = -R^{-1/2}G\) denote the Jacobian of \(r(x,y_t)\). Meanwhile, it is easy to verify that \({\partial \ell^\top}/{\partial x} =r(x,y_t)^\top J(x) \), so in the second-order Taylor sense, the log-likelihood Hessian can be approximated by the self-adjoint product of the residual’s Jacobian as
\begin{equation}\nonumber
\frac{\partial^2 \ell(x, y_t)}{\partial x^2} \approx J(x)^\top J(x)=G^\top  R^{-1}G.
\end{equation}

This approximation ensures that the Hessian is positive semidefinite, which in turn guarantees that the covariance matrix remains positive definite throughout the entire iterative process. The proof is as follows:

\begin{proof}
For any non-zero vector \(x \in \mathbb{R}^n\), consider the quadratic form as
\begin{equation}\nonumber
\begin{aligned}
\ x^\top\frac{\partial^2 \ell(x, y_t)}{\partial x_t^2}x&\approx x^\top G^\top R^{-1} Gx, \\
&=(Gx)^\top R^{-1} (Gx).
\end{aligned}
\end{equation}
Since \(R\) is positive definite, \((G x)^\top R^{-1} (G x) \geq 0\) for all x, with equality if and only if \(G x = 0\). This confirms that the approximated Hessian is positive semi-definite. Moreover, since the MM step in \eqref{eq:moment_matching} involves summation and \(P_{t|t-1}^{-1}\) is strictly positive definite, the iteratively obtained \((P_t^{-1})^{(k+1)}\) is guaranteed to be positive definite.
\end{proof}

Note that this approximation of the Hessian actually ignores the second term of the exact Hessian \eqref{eq.hessian} and is equivalent to the Gauss–Newton method \cite{bell1993iterated}. This is reasonable because, when the normalized residual is relatively small, such as during stable iterations where the Gaussian approximation closely matches the true posterior, the second term’s contribution to the Hessian becomes insignificant compared to the dominant positive semi-definite first term.

\textbf{Cholesky Decomposition.}
Another approach exploits the fact that any positive-definite matrix admits a Cholesky decomposition.  
Before the iteration of update step at time instant $t$ begins, the inverse covariance matrix is factorized as \((P_t^{-1})^{(k)} = (\Lambda_t^{(k)})(\Lambda_t^{(k)})^\top\), where \(\Lambda_t^{(k)}\) is a lower-triangular matrix. Based on the conclusions in \cite{lin2021tractable, goudar2022gaussian}, the original covariance iteration \eqref{eq:NANO_natgrad_cov} can be directly written as an iteration for the decomposed matrix \(\Lambda_t^{(k)}\) as
\begin{equation}
\label{eq.inter_cov_2}
\Lambda^{(k+1)} = \Lambda^{(k)}\exp_m(\frac{1}{2}(\Lambda^{(k)})^{-1}(V_{xx}^{(k)}+P_{t|t-1}^{-1})(\Lambda^{(k)})^{-\top}),
\end{equation}
where \(\exp_m(\cdot)\) is the matrix exponential function and \(V_{xx}^{(k)}=\Exp_{\mathcal{N}(x_t;\hat{x}_{t|t}^{(i)},P_{t|t}^{(i)})}
\!\left[\nabla_{x_t}^2\ell(x_t,y_t)\right]\). To make the computation more tractable, we can further simplify \eqref{eq.inter_cov_2} by using a first-order approximation of the matrix exponential map, yields
\begin{equation}
\nonumber
\Lambda^{(k+1)} \approx \Lambda^{(k)} +\frac{1}{2}(V_{xx}^{(k)}+P_{t|t-1}^{-1})(\Lambda^{(k)})^{-\top}.
\end{equation}
During the iteration, we update \(\Lambda_t\) directly and reconstruct the inverse of covariance using \((P_t^{-1})^{(k+1)} =(\Lambda_t^{(k+1)})(\Lambda_t^{(k+1)})^\top+\epsilon I \), where \(\epsilon>0\) is a small tunable parameter. This factorization and reconstruction can ensure that the covariance matrix remains positive definite throughout. The proof is as follows:

\begin{proof}
For any non-zero vector \(x \in \mathbb{R}^n\), we have
\begin{equation}
\nonumber
\begin{aligned}
x^\top(\Lambda_t^{(k+1)})(\Lambda_t^{(k+1)})^\top x=\|(\Lambda_t^{(k+1)})^\top x\|^2_2 \geq 0,
\end{aligned}
\end{equation}  
so the matrix \((\Lambda_t^{(k)}) (\Lambda_t^{(k)})^\top\) is positive semi-definite. Furthermore, by adding a small positive definite matrix \(\epsilon I\), the inverse of the covariance \((P_t^{-1})^{(k+1)}\) becomes positive definite, which means the covariance \(P_t^{(k+1)}\) is positive definite.
\end{proof}

In summary, both approaches ensure that the covariance matrix remains positive definite throughout the natural gradient updates. However, the second approach involves performing Cholesky decomposition and first-order approximation at each iteration, which results in higher computational complexity and potential errors. On the other hand, the first approach offers second-order approximation accuracy and avoids redundant calculations. Therefore, the first approach is typically used as the primary method for the standard NANO filter.

\subsection{NANO-filter on Manifold}

Most of the filters discussed above are designed in Euclidean vector spaces. In practice, however, the state of an autonomous robot often evolves on smooth manifolds that admit Lie-group structure. For example, orientation lies on the special orthogonal group \(\mathrm{SO}(3)\), and pose lies on the special Euclidean group \(\mathrm{SE}(3)\)~\cite{barfoot2024state}. Applying Euclidean filters to such non-Euclidean states ignores the underlying geometry, which can lead to singularities, constraint violations, and instability~\cite{barrau2016invariant,hartley2020contact}. Motivated by this, we can design our NANO-filter to operate directly on manifolds \cite{Zhang2025NANOSLAMNG}.

Let $\mathcal{M}$ be an $n$-dimensional manifold that is locally diffeomorphic to $\mathbb{R}^n$.
To conveniently express state increments and measure discrepancies on $\mathcal{M}$, we employ
the two operators $\boxplus$ and $\boxminus$~\cite{xu2021fast}, defined as
\begin{equation}
\nonumber
\boxplus: \mathcal{M} \times \mathbb{R}^n \rightarrow \mathcal{M} ; \quad
\boxminus: \mathcal{M} \times \mathcal{M} \rightarrow \mathbb{R}^n.
\end{equation}
Taking $\mathcal{M}=\mathrm{SO}(3)$ as an example, for $\bm{R}\in\mathrm{SO}(3)$ and $\bm{r}\in\mathbb{R}^3$, the two operators are instantiated as
\begin{equation}
\nonumber
\begin{aligned}
\bm{R} \boxplus \bm{r} = \bm{R}\,\operatorname{Exp}(\bm{r}), \quad \bm{R}_1 \boxminus \bm{R}_2 = \operatorname{Log}\!\left(\bm{R}_2^{\mathsf{T}}\bm{R}_1\right).
\end{aligned}
\end{equation}
where $\operatorname{Exp}(\bm{r}) = \bm{I}
+ \frac{\sin(\|\bm{r}\|)}{\|\bm{r}\|}\,[\bm{r}]_\times
+ \frac{1-\cos(\|\bm{r}\|)}{\|\bm{r}\|^2}\,[\bm{r}]_\times^2$
is the exponential map~\cite{barfoot2024state}, and $\operatorname{Log}(\cdot)$ denotes its inverse map.

Consider a dynamical system whose state $\bm{x}\in\mathcal{M}$ evolves on a manifold:
\begin{equation}
\label{eq.model_manifold}
\begin{aligned}
\bm{x}_{t+1} &= \bm{x}_t \boxplus \big(\bm{f}(\bm{x}_t,\bm{u}_t,\bm{\xi}_t)\Delta t\big),\\
\bm{y}_t &= \bm{h}(\bm{x}_t) + \bm{\zeta}_t .
\end{aligned}
\end{equation}
We define the error state with respect to a noise-free nominal state $\bar{\bm{x}}\in\mathcal{M}$ as
\begin{equation}
\label{eq.error_state}
\delta \bm{x} \triangleq \bm{x} \boxminus \bar{\bm{x}},
\end{equation}
where $\delta\bm{x}\in\mathbb{R}^n$ lies in a Euclidean space. The error state admits its own transition model, which is typically well approximated by a linear form:
\begin{equation}
\label{eq.error_dynamics}
\delta \bm{x}_{t+1} = \bm{F}_t\,\delta \bm{x}_t + \bm{B}_t\,\bm{\xi}_t .
\end{equation}
Meanwhile, the nominal state evolves deterministically under a noise-free transition model:
\begin{equation}
\label{eq.nominal_dynamics}
\bar{\bm{x}}_{t+1} = \bar{\bm{x}}_{t} \boxplus \big(\bm{f}(\bar{\bm{x}}_{t},\bm{u}_{t},\bm{0})\Delta t\big).
\end{equation}
Consequently, manifold state estimation can be reduced to estimating the Euclidean error state $\delta\bm{x}_t$ and then composing it back onto the manifold.

\paragraph{Prediction.}
At the end of each time step, the estimator \emph{retracts} the updated error into the nominal state and resets the error to zero, i.e., $\delta \hat{\bm{x}}_{t|t}\leftarrow \bm{0}$. At the beginning of time step $t$, the nominal state $\bar{\bm{x}}_t$ is first obtained from \eqref{eq.nominal_dynamics}. The error-state prediction then follows the standard Kalman filter recursion:
\begin{equation}
\begin{aligned}
\delta \hat{\bm{x}}_{t|t-1} &= \bm{F}_{t-1}\,\delta \hat{\bm{x}}_{t-1|t-1} = \bm{0}, \\
\bm{P}_{t|t-1} &= \bm{F}_{t-1}\bm{P}_{t-1|t-1}\bm{F}_{t-1}^\top + \bm{B}_{t-1}\bm{Q}_{t-1}\bm{B}_{t-1}^\top .
\end{aligned}
\end{equation}

\paragraph{Update via NANO iterations.}
Given $\bar{\bm{x}}_t$ and the predicted covariance $\bm{P}_{t|t-1}$, we perform natural-gradient iterations on the error state. Specifically, for iteration index $i$, we update the information matrix and mean as
\begin{equation}
\label{eq.iter}
\begin{aligned}
(\bm{P}_t^{-1})^{(i+1)} &= \bm{P}_{t|t-1}^{-1} +
\Exp_{\mathcal{N}^{\delta}}
\!\left[\nabla_{\delta \bm{x}_t}^2\ell(\bar{\bm{x}}_t \boxplus \delta \bm{x}_t,\bm{y}_t)\right],  \\
\delta \hat{\bm{x}}_{t|t}^{(i+1)} &= \delta \hat{\bm{x}}_{t|t}^{(i)} -
\bm{P}_{t|t}^{(i+1)}
\Exp_{\mathcal{N}^{\delta}}
\!\left[\nabla_{\delta \bm{x}_t}\ell(\bar{\bm{x}}_t \boxplus \delta \bm{x}_t,\bm{y}_t)\right] \\
&-\bm{P}_{t|t}^{(i+1)}\bm{P}_{t|t-1}^{-1}\delta \hat{\bm{x}}_{t|t}^{(i)},
\end{aligned}
\end{equation}
where $\mathcal{N}^{\delta}$ is shorthand for $\mathcal{N}(\delta \bm{x}_t;\,\delta \hat{\bm{x}}_{t|t}^{(i)},\,\bm{P}_{t|t}^{(i)})$.

\paragraph{Composition and reset.}
After convergence (or a fixed number of iterations), the final manifold estimate is obtained by composing the nominal state with the refined error estimate:
\begin{equation}
\label{eq.final_com}
\hat{\bm{x}}_{t|t} = \bar{\bm{x}}_t \boxplus \delta \hat{\bm{x}}_{t|t}^{(i+1)} .
\end{equation}
Finally, we reset the error state $\delta \bm{x}_t \leftarrow \bm{0}$ and proceed to the next time step.

\section{Illustrative Examples}\label{sec.example}

In this section, we present four representative real-world applications that highlight the practical advantages of the NANO framework. The first case study considers satellite attitude estimation. The second presents NANO-SLAM, a variant tailored for direct deployment in vehicle SLAM. The third introduces NANO-L, a Lie-group–aware extension designed to operate on Lie group manifolds for the quadruped robot state estimation. The fourth demonstrates the application of NANO to humanoid robot state estimation. Together, these four case studies provide a comprehensive evaluation of NANO in terms of accuracy, robustness, and computational efficiency.
\subsection{Satellite Attitude Estimation}
\subsubsection{System Modeling}
The satellite's attitude is described by Euler angles \cite{brossard2020ukfm}. The state variables are defined as \(\theta = [p, r, y]\), where \(p\), \(r\), and \(y\) correspond to pitch, yaw, and roll angles, respectively. The system input is the satellite's angular velocity, denoted as \(\omega\). The satellite is equipped with a gravimeter and a magnetometer, which measure the components of the gravitational and geomagnetic fields at the satellite's attitude. The system's state transition equation and observation equation are as follows:

\[
\theta_{t+1} = \theta_t + M(\theta_t) \omega_t \Delta t + \xi_t
\]
\[
y_t =
\begin{bmatrix}
R(\theta_t)^{\top} g \\
R(\theta_t)^{\top} b
\end{bmatrix}
+ \zeta_t
\]
where \(\Delta t = 0.01s\), \(g = [0, 0, -9.81]^T\), and \(b = [27.75, -3.65, 47.21]^T\) are the Earth's gravitational and geomagnetic intensities. \(M(\theta_t)\) and \(R(\theta_t)\) are the transition matrices between the Earth coordinate system and the satellite body coordinate system, defined as:

\[
M(\theta_t) =
\begin{bmatrix}
1 & \frac{\sin(p) \sin(r)}{\cos(p)} & \frac{\cos(r) \sin(p)}{\cos(p)} \\
0 & \cos(r) & -\sin(r) \\
0 & \frac{\sin(r)}{\cos(p)} & \frac{\cos(r)}{\cos(p)}
\end{bmatrix}
\]
\[
R(\theta_t) = R_p R_r R_y
\]
where \(R_p\), \(R_r\), and \(R_y\) are the rotation matrices defined as:

\[
R_p = \begin{bmatrix} 
\cos(p) & 0 & \sin(p) \\
0 & 1 & 0 \\
-\sin(p) & 0 & \cos(p) 
\end{bmatrix}
\]
\[
R_r = \begin{bmatrix} 
1 & 0 & 0 \\
0 & \cos(r) & -\sin(r) \\
0 & \sin(r) & \cos(r) 
\end{bmatrix}
\]
\[
R_y = \begin{bmatrix} 
\cos(y) & -\sin(y) & 0 \\
\sin(y) & \cos(y) & 0 \\
0 & 0 & 1 
\end{bmatrix}
\]

In this experiment, we assume that the process noise follows a Laplace distribution and the observation noise follows a Gaussian distribution. Specifically, the process noise has a 10\% chance of being contaminated, with \(\xi_t \sim 0.9 \cdot \text{Laplace}(0, 10^{-5} I_{3 \times 3}) + 0.1 \cdot \text{Laplace}(0, 10^{-2} I_{3 \times 3})\), while the observation noise has a 15\% chance of being contaminated with a beta distribution, with \(\zeta_t \sim 0.85 \cdot N(0, 10^{-4} I_{3 \times 3}) + 0.15 \cdot \text{beta}(1.2, 1.5)\).
Such designs are used to test the performance of NANO under non-Gaussian noises.

\subsubsection{Experimental Setup}
We consider two angular velocity inputs:  
\(\omega_t = \frac{\pi}{18} \sin(2 \Delta t \pi t) \cdot I_{3 \times 1}\), and \(\omega_t = \frac{\pi}{18\sqrt2} \cdot I_{3 \times 1}\).  
We also consider two initial state estimation scenarios: one with an accurate initial state and the other with a biased initial state. The initial state is set as \(x_0 \sim N(0, 10^{-3} I_{3 \times 3})\), while the initial state estimate \(\hat{x}_0\) is set to \([0, 0, 0]\) and \([10^\circ, 10^\circ, 10^\circ]\), respectively. We compare the performance of the NANO, UKF, and EKF filters under these scenarios. The accuracy of the estimation is evaluated using the RMSE metric.

\subsubsection{Comparison Results}
The experimental results corresponding to the two angular velocity input conditions described in Section 2 are summarized in Table IV and Table V, respectively. It can be observed that NANO consistently achieves lower RMSE than EKF and UKF, demonstrating superior estimation accuracy under both input conditions.
\begin{table}[ht]
\fontsize{8.2}{8.2}\selectfont
\caption{Comparison of RMSE and Computation Time under Sinusoidal input \(\omega_t = \frac{\pi}{18} \sin(2 \Delta t \pi t) \cdot I_{3 \times 1}\)}
\label{table:ate_re}
\centering
\begin{threeparttable}[t]
\renewcommand{\arraystretch}{1.3} 
\begin{tabular}{ c|c| c c c  c c c | c } 
\toprule[1pt]
$ \hat{x}_0 $ & Method & RMSE & Time [ms] \\ \midrule[0.5pt]
\multirow{3}{*}{$ [0 \, 0 \, 0]^T $} & EKF & 0.944 (0.020) & 0.251 \\ \cline{2-4} 
 & UKF & 0.944 (0.020) & 0.387 \\ \cline{2-4} 
 & NANO & 0.589 (0.006) & 3.084 \\ \midrule[0.5pt]
\multirow{3}{*}{$ [\frac{\pi}{18} \, \frac{\pi}{18} \, \frac{\pi}{18}]^T $} & EKF & 1.576 (0.017) & 0.268 \\ \cline{2-4}
 & UKF & 1.576 (0.017) & 0.429 \\ \cline{2-4}
 & NANO & 1.390 (0.018) & 3.189 \\ 
\bottomrule[1pt]
\end{tabular}
\end{threeparttable}
\end{table}

\begin{table}[ht]
\fontsize{8.2}{8.2}\selectfont
\caption{Comparison of RMSE and Computation Time under constant input \(\omega_t = \frac{\pi}{18\sqrt2} \cdot I_{3 \times 1}\)}
\label{table:ate_re}
\centering
\begin{threeparttable}[t]
\renewcommand{\arraystretch}{1.3} 
\begin{tabular}{ c|c| c c c  c c c | c } 
\toprule[1pt]
$ \hat{x}_0 $ & Method & RMSE & Time [ms] \\ \midrule[0.5pt]
\multirow{3}{*}{$ [0 \, 0 \, 0]^T $} & EKF & 0.971 (0.021) & 0.262 \\ \cline{2-4} 
 & UKF & 0.971 (0.021) & 0.415 \\ \cline{2-4} 
 & NANO & 0.597 (0.007) & 2.946 \\ \midrule[0.5pt]
\multirow{3}{*}{$ [\frac{\pi}{18} \, \frac{\pi}{18} \, \frac{\pi}{18}]^T $} & EKF & 1.592 (0.018) & 0.248 \\ \cline{2-4}
 & UKF & 1.592 (0.018) & 0.383 \\ \cline{2-4}
 & NANO & 1.391 (0.018) & 3.093 \\ 
\bottomrule[1pt]
\end{tabular}
\end{threeparttable}
\end{table}

\begin{figure}[ht]
\centering

\begin{subfigure}{0.48\linewidth}
    \centering
    \includegraphics[width=\linewidth]{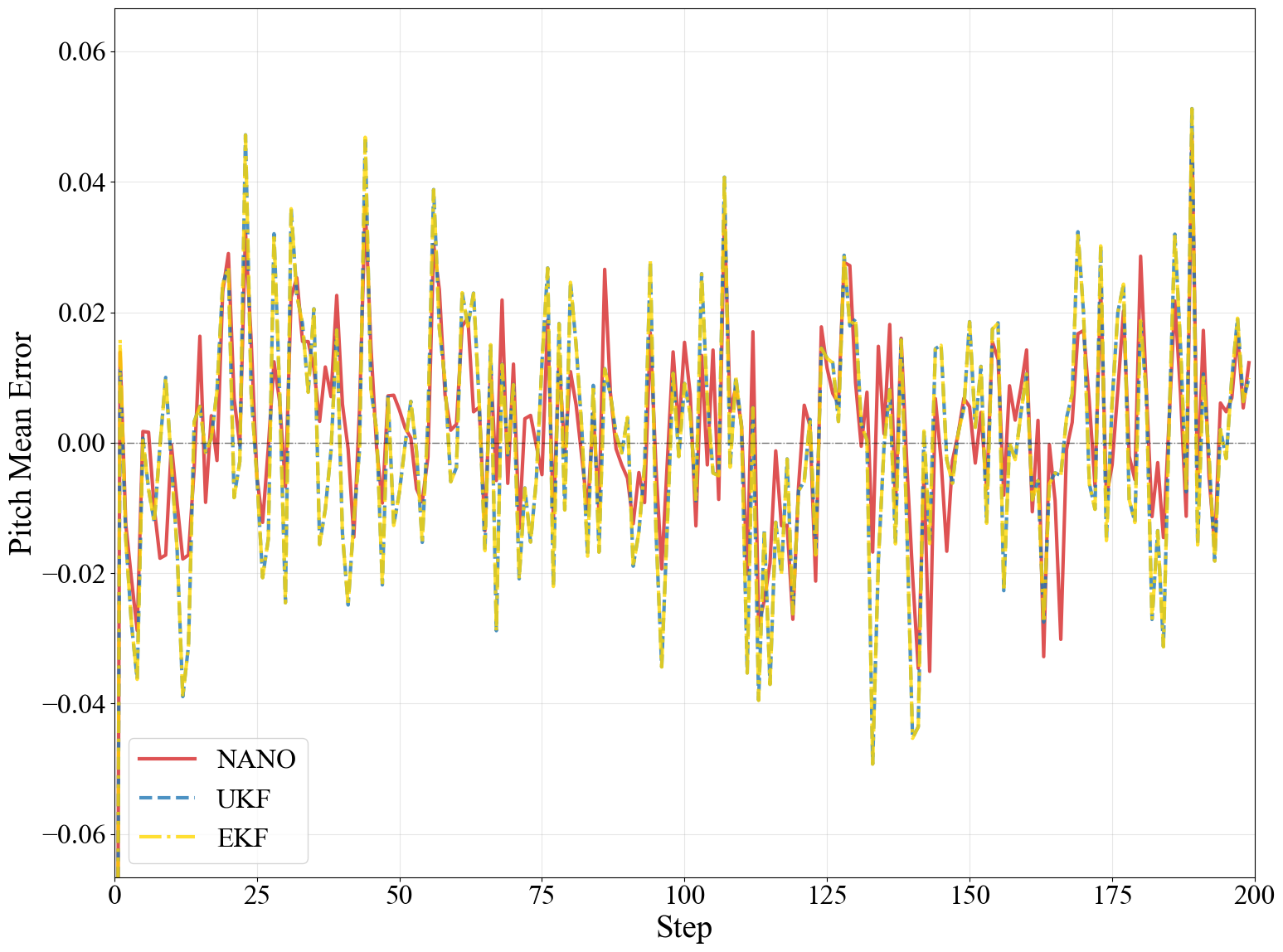}
    \caption{Mean pitch error at each time step}
\end{subfigure}
\hfill
\begin{subfigure}{0.48\linewidth}
    \centering
    \includegraphics[width=\linewidth]{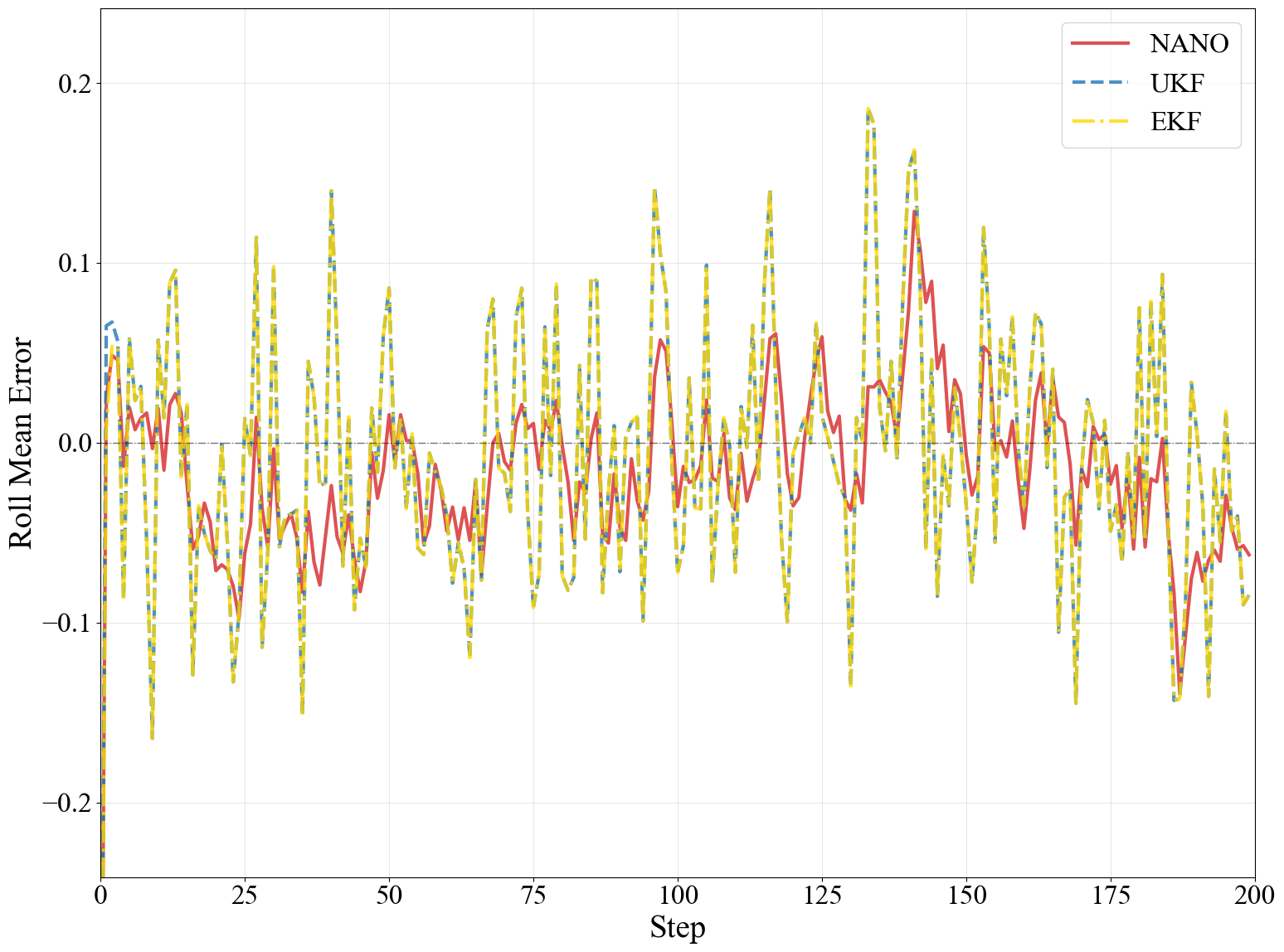}
    \caption{Mean roll error at each time step}
\end{subfigure}

\vspace{0.3cm}

\begin{subfigure}{0.48\linewidth}
    \centering
    \includegraphics[width=\linewidth]{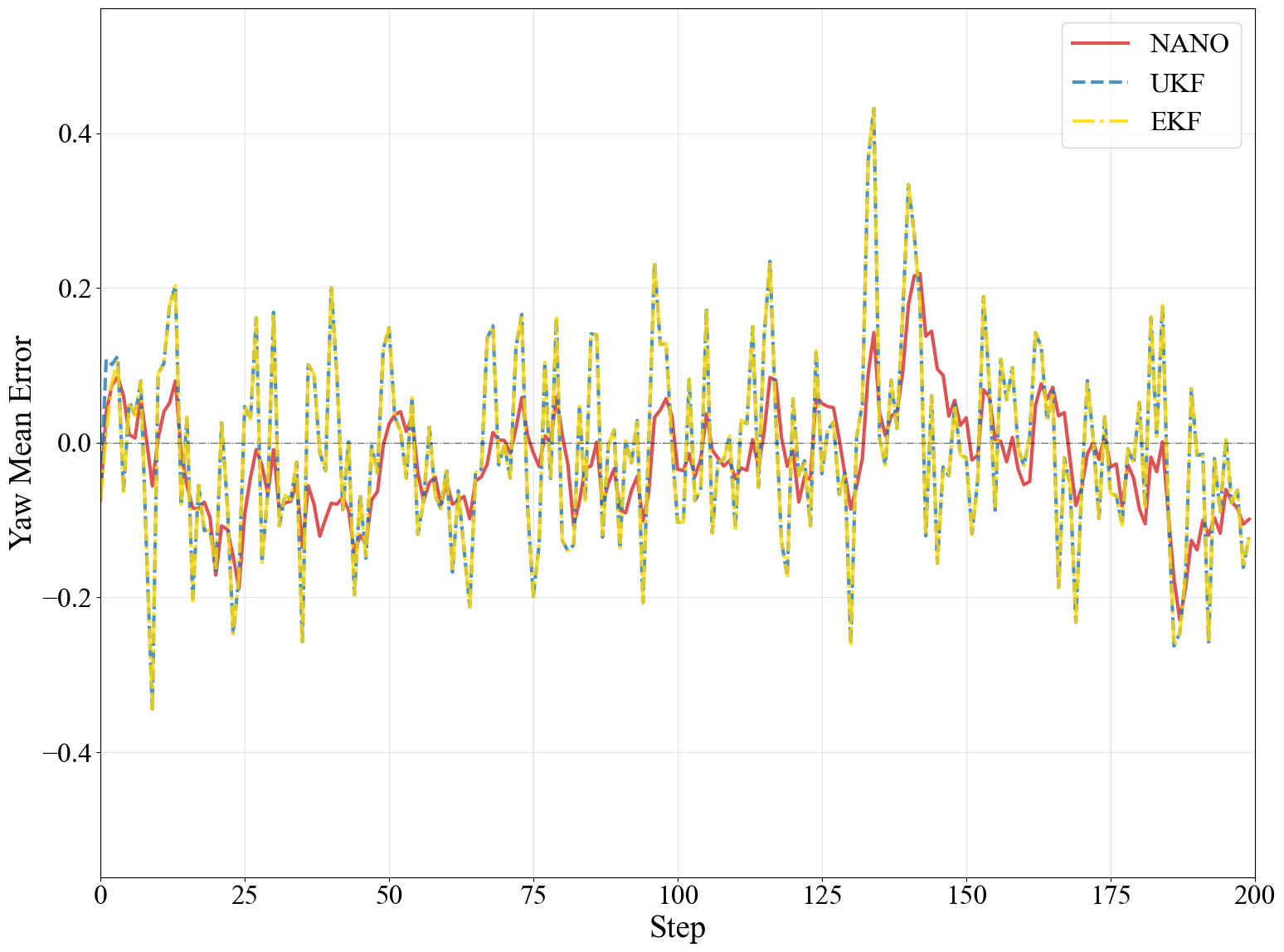}
    \caption{Mean yaw error at each time step}
\end{subfigure}
\hfill
\begin{subfigure}{0.48\linewidth}
    \centering
    \includegraphics[width=\linewidth]{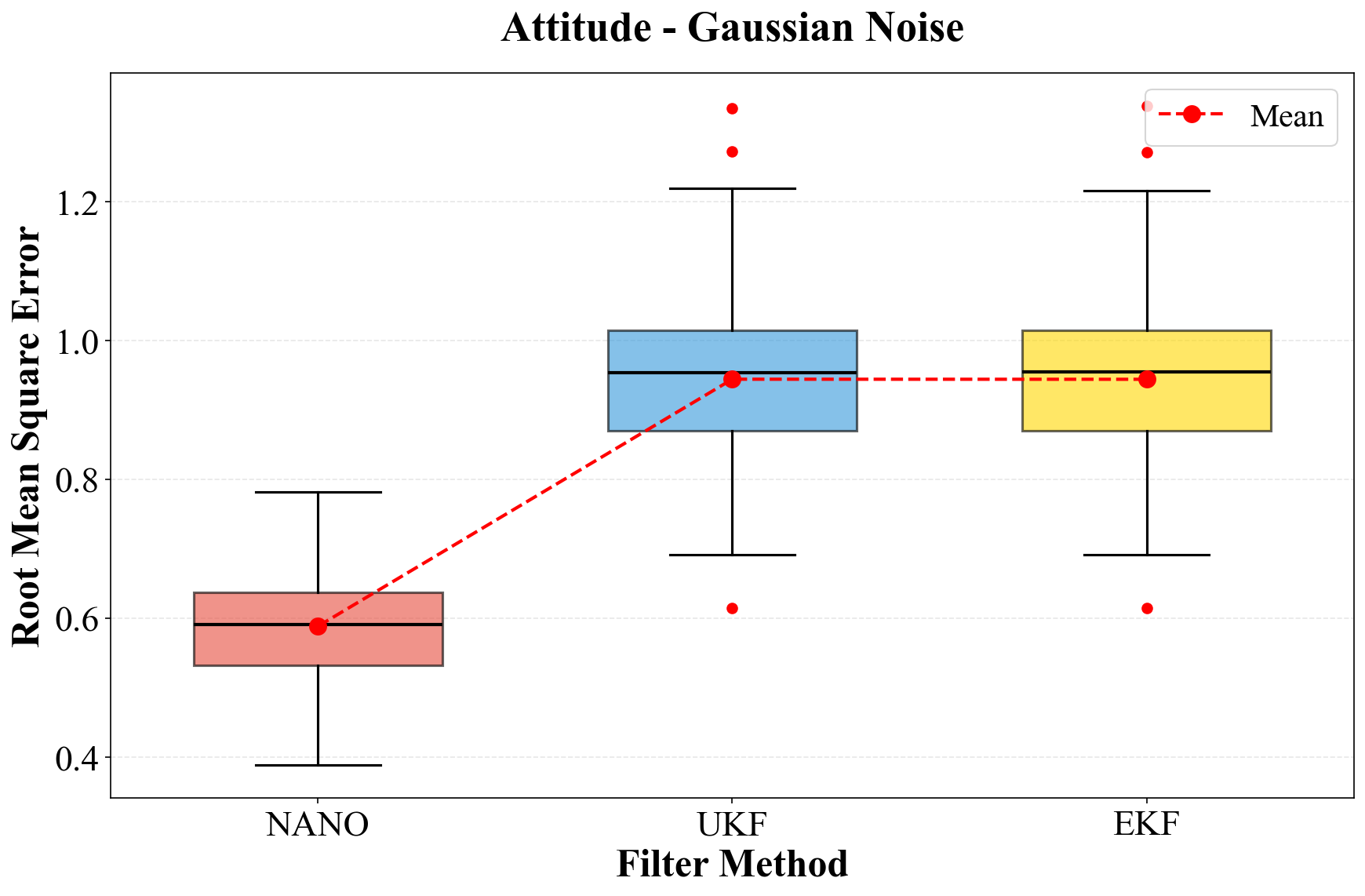}
    \caption{RMSE boxplot for each experiment}
\end{subfigure}

\caption{Comparison of attitude estimation performance under Sinusoidal input and accurate initialization.}
\label{fig:comparison}
\end{figure}

When the initial state estimate is accurate ($\hat{x}_0 = [0,0,0]^T$), EKF and UKF exhibit nearly identical RMSE values, whereas NANO achieves significantly lower errors. This result indicates that NANO is more effective in suppressing the influence of non-Gaussian process noise and contaminated observation noise. This advantage is further illustrated in Fig.~\ref{fig:comparison}(a)-(c), where the time-series error curves under sinusoidal input show that NANO maintains smaller fluctuations in pitch, roll, and yaw throughout the entire time horizon. Moreover, the RMSE boxplot in Fig.~\ref{fig:comparison}(d) demonstrates that NANO has a lower median and a more compact interquartile range, confirming its superior accuracy and estimation stability from a statistical perspective.

When the initial estimate is biased ($\hat{x}_0 = [\frac{\pi}{18}, \frac{\pi}{18}, \frac{\pi}{18}]^T$), the estimation errors of all filters increase. Nevertheless, NANO still maintains the lowest RMSE among the three methods, highlighting its superior robustness to initialization errors and stronger convergence capability.

More importantly, under both sinusoidal and constant angular velocity inputs, NANO consistently preserves its performance advantage. This demonstrates that the improvement achieved by NANO is not dependent on a specific motion pattern, but rather reflects a fundamentally stronger estimation capability across different dynamic conditions, validating its robustness and generalization ability.

\subsection{NANO for Vehicle SLAM}
\subsubsection{System Modeling}
The autonomous vehicle in a 2-dimensional (2D) coordinate system is illustrated in Fig.~\ref{fig.vehicle}. The vehicle position is denoted by $(p_x, p_y)$ and its heading is $\theta$. The forward speed is $v$ and the steering input is $\alpha$. Moreover, $L$ denotes the wheelbase and $H$ the track width. The discrete-time vehicle dynamics follow the standard Ackermann kinematic model:
\begin{equation}
\label{eq.motion_model}
\begin{aligned}
x_{t+1}
&= x_t + \begin{bmatrix}
\Delta T \left( v_t \cos\theta_t - \frac{v_t}{L} \tan\theta_t (\alpha_t \sin\theta_t + b \cos\theta_t) \right) \\
\Delta T \left( v_t \sin\theta_t + \frac{v_t}{L} \tan\theta_t (\alpha_t \cos\theta_t + b \sin\theta_t) \right) \\
 \Delta T \frac{v_t}{L} \tan(\alpha_t) 
\end{bmatrix}
 + \xi_t.
\end{aligned}
\end{equation}
Here, the vehicle state is $x = [p_x, p_y, \theta]^{\top} \in \mathbb{R}^3$, and the control input is $u = [v, \alpha]^{\top}\in \mathbb{R}^2$. The process disturbance is modeled as zero-mean Gaussian noise, $\xi_t \sim \mathcal{N}(0, Q_t)$ with $Q_t \in \mathbb{R}^{3\times3}$, and $\Delta T$ is the sampling interval. Note that $v$ corresponds to the velocity at the axle center and is not directly observed; instead, it is computed from the wheel encoder speed $v_e$. In practice, we use
\begin{equation}
v = \frac{v_e}{ 1 -  H \cdot \tan{\alpha}/ (2L)}.
\end{equation}
We consider range-bearing observations between the vehicle and landmarks, obtained from an onboard laser range finder. At each time step, the sensor outputs $K$ measurements \(y_t = \{y_{k,t}\}_{k=1}^K\) with \(y_{k,t}\in \mathbb{R}^2\). The map contains $M$ landmarks \(m_{1:M} =\{m_{j}\}_{j=1}^M \), where each landmark is $m_j = \left[m_{x,j}, \, m_{y,j} \right]^\top \in \mathbb{R}^2$. 
If the $k$-th observation at time $t$ is associated with landmark $j$, the measurement model is
\begin{equation}
\label{eq.meas_model}
y_{k,t} = g(x_t, m_{j}) + \zeta_{k,t},     
\end{equation}
with
\begin{equation}
\begin{aligned}
\label{eq.meas_func}
g(x_t, m_{j}) =   \begin{bmatrix}
\sqrt{(m_{x,j} - p_{x,t})^2 + (m_{y,j} - p_{y,t})^2}\\
\arctan{\frac{m_{y,j} - p_{y,t}}{m_{x,j} - p_{x,t}}} - \theta_t
\end{bmatrix}
\end{aligned}.
\end{equation}
The measurement noise is assumed Gaussian, $\zeta_{k,t}\sim\mathcal{N}(0,R_t)$ with covariance $R_t\in\mathbb{R}^{2\times2}$.

\subsubsection{Experimental Results}
We evaluate NANO-SLAM on the Sydney Victoria Park dataset to examine its behavior in a realistic, large-scale environment. The platform is equipped with wheel encoders, a laser range finder, and GPS, and it traverses the park along a trajectory longer than 3.5\,km (Fig.~\ref{fig.map}). Encoder readings provide velocity and steering-related inputs, while the laser range finder scans a 180$^\circ$ frontal field of view and produces range--bearing measurements for landmark detection; in this dataset, nearby trees serve as natural landmarks. GPS measurements are treated as reference to compute localization error. The vehicle wheelbase is 2.83\,m and the track width is 0.76\,m.

\begin{figure}[t]
\centering
\includegraphics[width=0.66\linewidth]{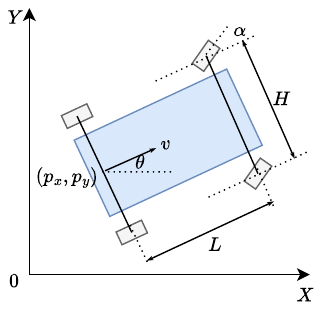}
\caption{Vehicle in the 2D coordinate system.}
\label{fig.vehicle}
\end{figure}

\begin{figure}[t]
\centering
\includegraphics[width=0.7\linewidth]{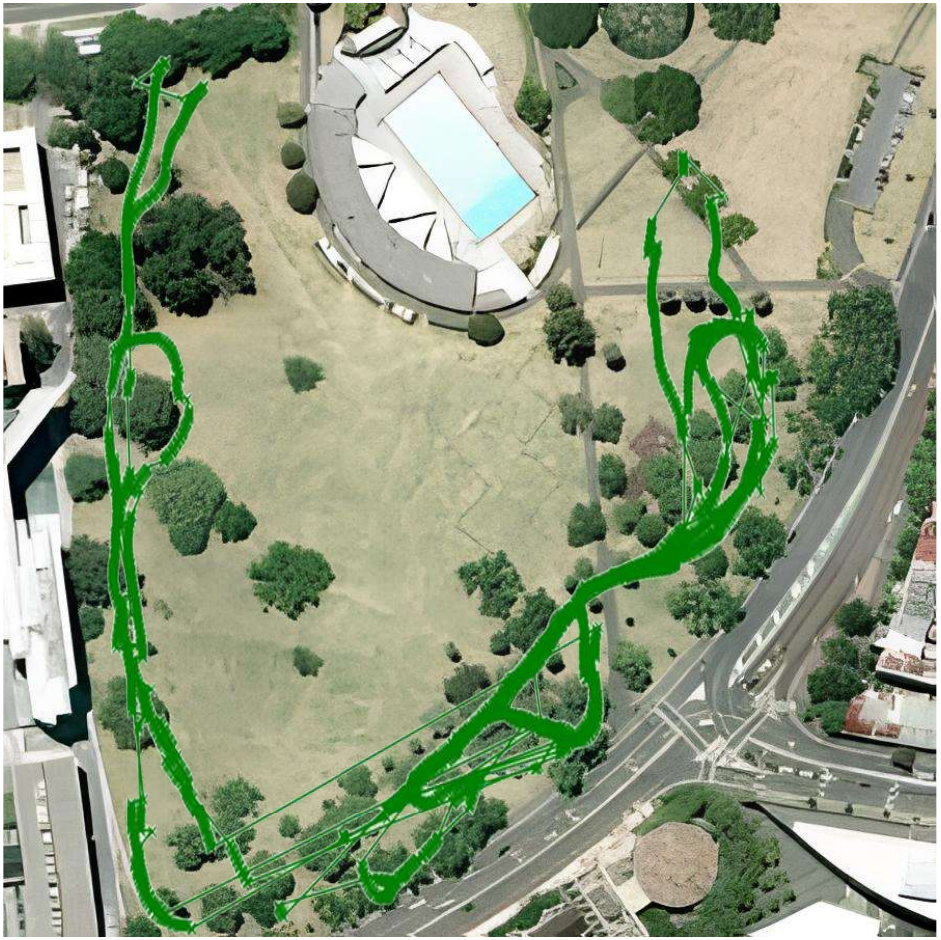}
\caption{Victoria Park map and vehicle trajectory obtained from GPS data.}
\label{fig.map}
\end{figure}

\begin{figure}[htbp]
\centering
\subfloat[]{
\includegraphics[width=0.45\textwidth]{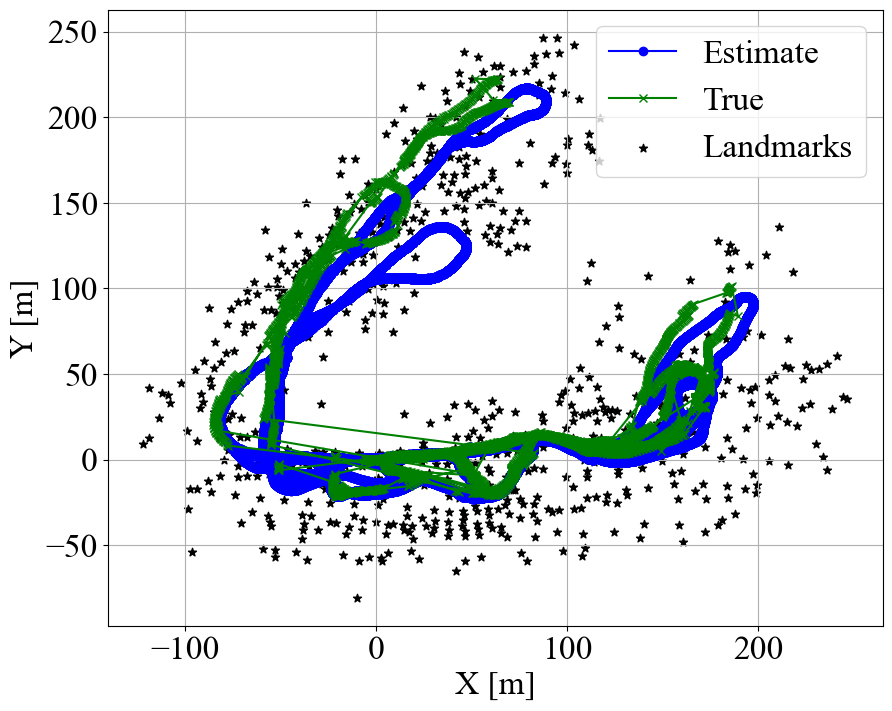}
\label{est:1}
}
\\
\subfloat[]{
\includegraphics[width=0.45\textwidth]{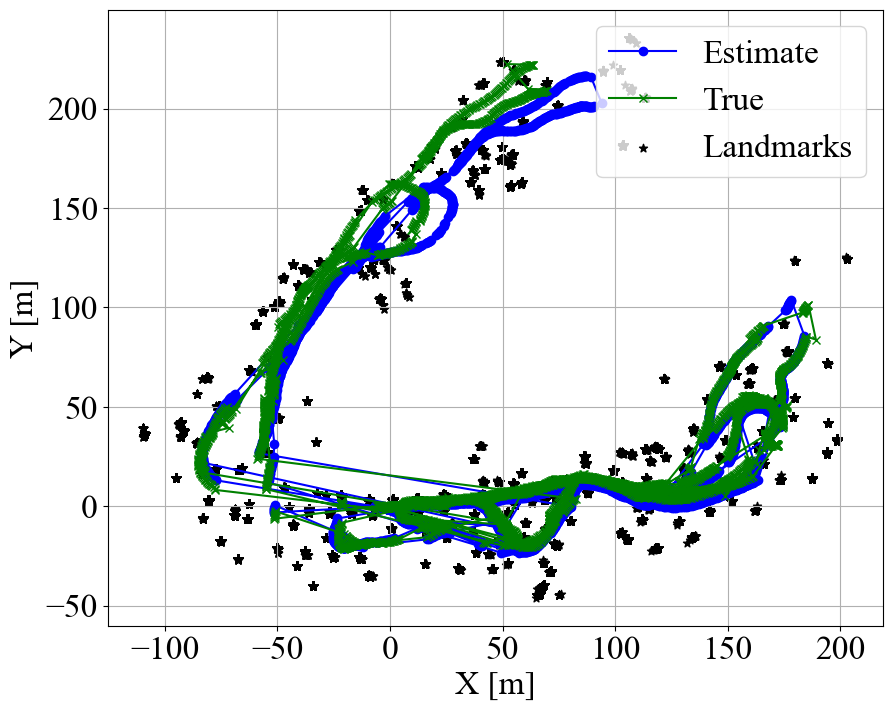}
\label{est:2}
}
\\
\subfloat[]{
\includegraphics[width=0.45\textwidth]{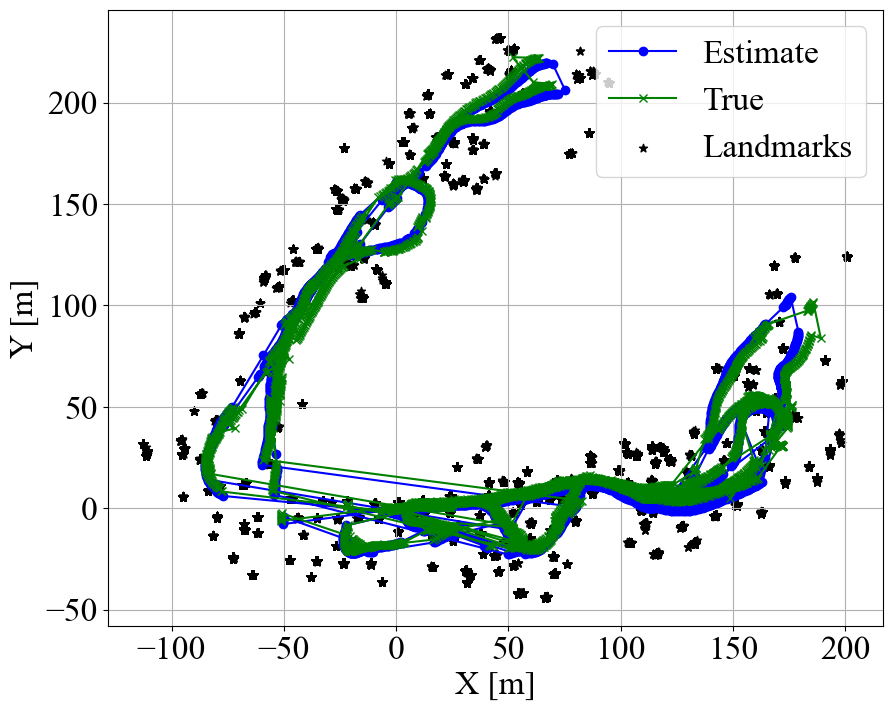}
\label{est:3}
}
\caption{Experimental results of the vehicle localization and landmark location estimates. (a) EKF-SLAM. (b) UFastSLAM. (c) NANO-SLAM}
\label{fig.estimate}
\end{figure}

We compare NANO-SLAM with EKF-SLAM~\cite{huang2007convergence} and UFastSLAM~\cite{kim2008unscented}, which represent a conventional linearization-based SLAM approach and an RBPF-based alternative, respectively. To account for practical disturbances (e.g., wheel slip and uneven terrain), we set the standard deviations of velocity and steering noises to $\sigma_v = 2$\,m/s and $\sigma_g = 6^\circ$. The range and bearing channels use $\sigma_r = 1$\,m and $\sigma_b = 3^\circ$ to model sensing and environmental uncertainty. Both UFastSLAM and NANO-SLAM employ 10 particles. Localization accuracy is quantified using the RMSE:
\begin{equation}
\label{eq.rmse}
\mathrm{RMSE} = \sqrt{\frac{1}{T}\sum_{t=1}^T\|p_t - \hat{p}_t\|_2^2},
\end{equation}
where $p_t$ and $\hat{p}_t$ denote the ground-truth and estimated positions at time step $t$, and $T$ is the total number of time steps.

\begin{table}[htbp]
\centering
\caption{Comparison of different vehicle SLAM methods.}
\label{tab:slam_comparison}
\begin{tabular}{lcc}
\toprule
\textbf{Method} & \textbf{RMSE [m]} & \textbf{Time [ms]} \\
\midrule
EKF-SLAM    & 7.783 & 312.79 \\
\midrule
UFastSLAM   & 5.147 & \textbf{14.896} \\
\midrule
NANO-SLAM   & \textbf{2.538} & 17.692 \\
\bottomrule
\end{tabular}
\end{table}

As reported in Table~\ref{tab:slam_comparison}, NANO-SLAM achieves the best localization accuracy, reducing RMSE to 2.538\,m, which is more than a 50\% improvement over UFastSLAM (5.147\,m), the next strongest baseline. This improvement is mainly attributed to avoiding measurement-model linearization and the resulting approximation error. The runtime remains close to UFastSLAM (17.692\,ms vs. 14.896\,ms), corresponding to an overhead of about 18\%. Fig.~\ref{fig.estimate} further visualizes the estimated trajectories and landmarks, where NANO-SLAM produces the trajectory that most closely aligns with the GPS reference.

\subsection{Quadruped Robot State Estimation}
We consider a quadruped robot state estimation problem whose state naturally evolves on a Lie group manifold \cite{Zhang2026NaturalGL}. This setting is practically challenging due to discontinuous foot--ground contacts and occasional outliers caused by imperfect contact detection \cite{bloesch2013state,hartley2020contact}. 

\subsubsection{System Modeling}
The robot base orientation, velocity, and position in the world frame are denoted by $\bm{R}_t\in \mathrm{SO}(3)$, $\bm{v}_t\in\mathbb{R}^3$, and $\bm{p}_t\in\mathbb{R}^3$. To incorporate leg kinematics, we further include the world-frame position of a generic foot contact point $\bm{s}_t\in\mathbb{R}^3$. These variables can be lifted to a compact matrix form on $\mathrm{SE}_3(3)$:
\begin{equation}
\label{eq:leg_state}
\bm{x}_t \;=\;
\begin{bmatrix}
\bm{R}_t & \bm{v}_t & \bm{p}_t & \bm{s}_t \\
\bm{0}_{3\times3} & \bm{0}_{3\times1} & \bm{0}_{3\times1} & \bm{0}_{3\times1} 
\end{bmatrix}
\in \mathrm{SE}_3(3).
\end{equation}

The onboard IMU provides body-frame angular velocity $\tilde{\bm{\omega}}_t$ and linear acceleration $\tilde{\bm{a}}_t$ \cite{barfoot2024state,hartley2020contact}. The continuous-time dynamics admit the following Lie-group form:
\begin{equation}
\label{eq:leg_process}
\begin{aligned}
\frac{\mathrm{d}}{\mathrm{d}t}\bm{x}_t 
&= \bm{f}_{\bm{u}_t}(\bm{x}_t) + \bm{x}_t\bm{\xi}_t^\wedge \\
&= \begin{bmatrix}
\bm{R}_t \tilde{\bm{\omega}}_t^\wedge  & \bm{R}_t \tilde{\bm{a}}_t + \bm{g} & \bm{v}_t & \bm{0}_{3 \times 1} \\
\bm{0}_{3\times3} & \bm{0}_{3\times1} & \bm{0}_{3\times1} & \bm{0}_{3\times1} 
\end{bmatrix}
+ \bm{x}_t\bm{\xi}_t^\wedge ,
\end{aligned}
\end{equation}
where $\bm{g}\in\mathbb{R}^3$ is gravity. The process noise $\bm{n}_t$ is modeled as zero-mean Gaussian with block-diagonal covariance, capturing IMU noise and contact-point perturbations \cite{bloesch2013state,hartley2020contact}. In practice, \eqref{eq:leg_process} is discretized via standard Lie-group integration over the sampling interval.

Joint encoders provide leg joint angles, which are mapped through forward kinematics to obtain the foot position relative to the body. The same relative position can be computed from the state as $\bm{R}_t^\top(\bm{s}_t-\bm{p}_t)$. This yields the leg-odometry measurement model
\begin{equation}
\label{eq:leg_meas_simple}
\bm{y}_t \;=\; \bm{R}_t^\top(\bm{s}_t-\bm{p}_t) \;+\; \bm{\zeta}_t,
\qquad
\bm{\zeta}_t \sim \mathcal{N}(\bm{0},\bm{\Gamma}_t),
\end{equation}
where $\bm{\Gamma}_t$ captures encoder and kinematic uncertainty. The above model can be written in an invariant form
\begin{equation}
\label{eq:leg_meas_invariant}
\bm{y}_t \;=\; \bm{g}(\bm{x}_t) + \bm{\zeta}_t,
\qquad
\bm{g}(\bm{x}_t) := \bm{x}_t^{-1}\bm{b},
\end{equation}
where $\bm{b}$ is a constant vector selecting the corresponding relative-position component \cite{barrau2016invariant,hartley2020contact}. This invariant structure also applies to other relative-position sensing modalities such as visual or LiDAR odometry \cite{xu2021fast}.

\subsubsection{Experimental Setup}
As shown in Fig.~\ref{fig:env}, two real-world test environments are used: a \textit{flat} terrain as the nominal setting, and an \textit{unstable} surface that induces uncertain contact conditions and challenges estimator robustness.

In both environments, the robot executes a trot gait while recording multimodal onboard measurements at 200\,Hz, including one IMU, 12 joint encoders, and four foot contact sensors. Ground-truth pose is provided by a NOKOV motion-capture system at 100\,Hz. For each environment, five sequences of 60\,s are collected to reduce the impact of run-to-run variations. All methods are evaluated offline on the same datasets using a laptop with an Intel Core i9-14900HX CPU, and are initialized from the ground-truth state. Noise settings and algorithm hyperparameters are kept fixed across all runs, as summarized in Table~\ref{table.param}.

The comparison focuses on EKF~\cite{bloesch2013state} and InEKF~\cite{hartley2020contact}, two widely adopted baselines for legged-robot state estimation. UKF-style methods are not included since they are seldom deployed on such platforms due to higher computational overhead and reduced numerical robustness. Performance is quantified using absolute trajectory error (ATE) and relative error (RE) for position, velocity, and orientation, following the definitions in \cite{zhang2018tutorial}. ATE captures global trajectory consistency with respect to the ground truth, while RE measures local drift over a fixed interval of 3\,s.

\begin{figure}[!t]
\centering
\includegraphics[width=0.5\textwidth]{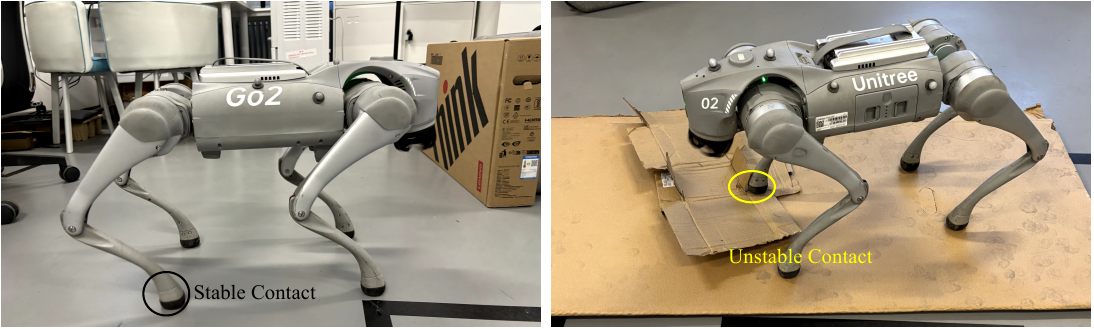}
\caption{Different environments of real-world legged robot experiments.}
\label{fig:env}
\end{figure}

\begin{table}[h]
\fontsize{9}{9}\selectfont
\caption{Noise parameters and NANO-L's hyperparameter.}
\label{table.param}
\centering
\begin{threeparttable}[t]
\renewcommand{\arraystretch}{1.2}
\begin{tabular}{ c c c} 
\toprule[1pt]
Name & Symbol & Value \\
\midrule[0.5pt]
Accelerometer & $\sigma_a$ & $0.2568\ \rm{m/s^2}$  \\
\midrule[0.5pt]
Gyroscope & $\sigma_\omega$ & $0.00139\ \mathrm{rad/s}$  \\
\midrule[0.5pt]
Joint encoder & $\sigma_{e}$ & $0.3\ \rm{rad}$  \\
\midrule[0.5pt]
Slip & $\sigma_{s}$ & $0.001\ \rm{m/s}$  \\
\midrule[0.5pt]
Stopping threshold & $\gamma$ & $10^{-4}$  \\
\bottomrule[1pt]
\end{tabular}
\end{threeparttable}
\end{table}

\subsubsection{Comparison Results}
Table~\ref{table:ate_re} reports ATE and RE over both environments, showing the mean (outside parentheses) and variance (inside parentheses) computed from five trials. As expected, all filters exhibit larger errors on the unstable surface, confirming the increased difficulty under imperfect contact conditions. Across both terrains, manifold-aware estimators provide clear gains over the standard EKF, most notably in position accuracy. Moreover, NANO-L achieves the best overall performance, outperforming InEKF on nearly all metrics while maintaining the smallest degradation when moving from flat to unstable terrain. On the unstable surface, for example, NANO-L reduces the position ATE by about 41\% relative to InEKF.

Fig.~\ref{fig:unstable} visualizes trajectories from a representative unstable-terrain run (EKF omitted for clarity). The qualitative comparison is consistent with Table~\ref{table:ate_re}, where NANO-L remains closer to the motion-capture reference across state components. The vertical position $p_z$ shows drift for all methods, which aligns with the limited observability of absolute height from leg odometry \cite{yoon2023invariant}; nevertheless, NANO-L maintains a noticeable tracking advantage.

Fig.~\ref{fig:traj} further compares 2D localization on the flat terrain across runs with similar speed ($\approx 0.2$\,m/s) but different travel distances and durations. The results highlight that legged-robot odometry accumulates drift over time, whereas NANO-L mitigates error growth by avoiding observation-model linearization, leading to markedly reduced trajectory drift. In terms of runtime, InEKF is the fastest, but NANO-L processes each step in about 3.35\,ms, remaining comfortably within the 5\,ms budget required for 200\,Hz operation.

\begin{table*}[!t]
\fontsize{8.2}{8.2}\selectfont
\caption{Comparison of ATE, RE, and Computation Time Across Real-World Environments}
\label{table:ate_re}
\centering
\begin{threeparttable}[t]
\renewcommand{\arraystretch}{1.3} 
\begin{tabular}{ c|c| c c c  c c c | c } 
\toprule[1pt]
 \multirow{2} {*}{Environment} & \multirow{2} {*}{Method} & $\rm ATE_{pos}$ & $\rm ATE_{vel}$ & $\rm ATE_{ori}$ & $\rm RE_{pos}$ & $\rm RE_{vel}$ & $\rm RE_{ori}$ & Time\\
 & & $\left [ \rm m \right ]$ & $\left [ \rm m/s \right ]$ & $\left [ \rm rad \right ]$ & $\left [ \rm m \right ]$ & $\left [ \rm m/s \right ]$ & $\left [ \rm rad \right ]$ & $\left [ \rm ms \right ]$ \\
\midrule[0.5pt]
\multirow{3} {*}{Flat} 
& EKF & $0.373\ (0.055)$ & $0.122\ (0.032)$ & $0.020\ (0.005)$ & $0.076\ (0.010)$ & $0.107\ (0.008)$ & $0.017\ (0.005)$ & 3.905\\
& InEKF & $0.281\ (0.099)$ & $0.108\ (0.007)$ & $0.022\ (0.004)$ & $0.072\ (0.010)$ & $0.106\ (0.015)$ & $0.017\ (0.004)$ & \textbf{0.712}\\
& NANO-L & $\bf 0.212$ $(0.078)$ & $\bf 0.107$ $(0.009)$ & $\bf 0.017$ $(0.005)$ & $\bf 0.065$ $(0.004)$ & $\bf 0.093$ $(0.014)$ & $\bf 0.016$ $(0.003)$ & 3.351\\
\midrule[0.5pt]
\multirow{3} {*}{Unstable} 
& EKF & $0.633\ (0.200)$ & $0.149\ (0.057)$ & $0.021\ (0.002)$ & $0.229\ (0.163)$ & $0.094\ (0.014)$ & $0.016\ (0.003)$ & 3.905\\
& InEKF & $0.401\ (0.108)$ & $0.140\ (0.028)$ & $0.023\ (0.001)$ & $0.162\ (0.082)$ & $0.106\ (0.010)$ & $0.017\ (0.003)$ & \textbf{0.712}\\
& NANO-L & $\bf 0.236$ $(0.058)$ & $\bf 0.125$ $(0.011)$ & $\bf 0.012$ $(0.001)$ & $\bf 0.104$ $(0.031)$ & $\bf 0.108$ $(0.006)$ & $\bf 0.014$ $(0.004)$ & 3.351\\
\bottomrule[1pt]
\end{tabular}
\end{threeparttable}
\end{table*}

\begin{figure*}[!t]
\centering
\subfloat{
    \includegraphics[width=0.33\textwidth]{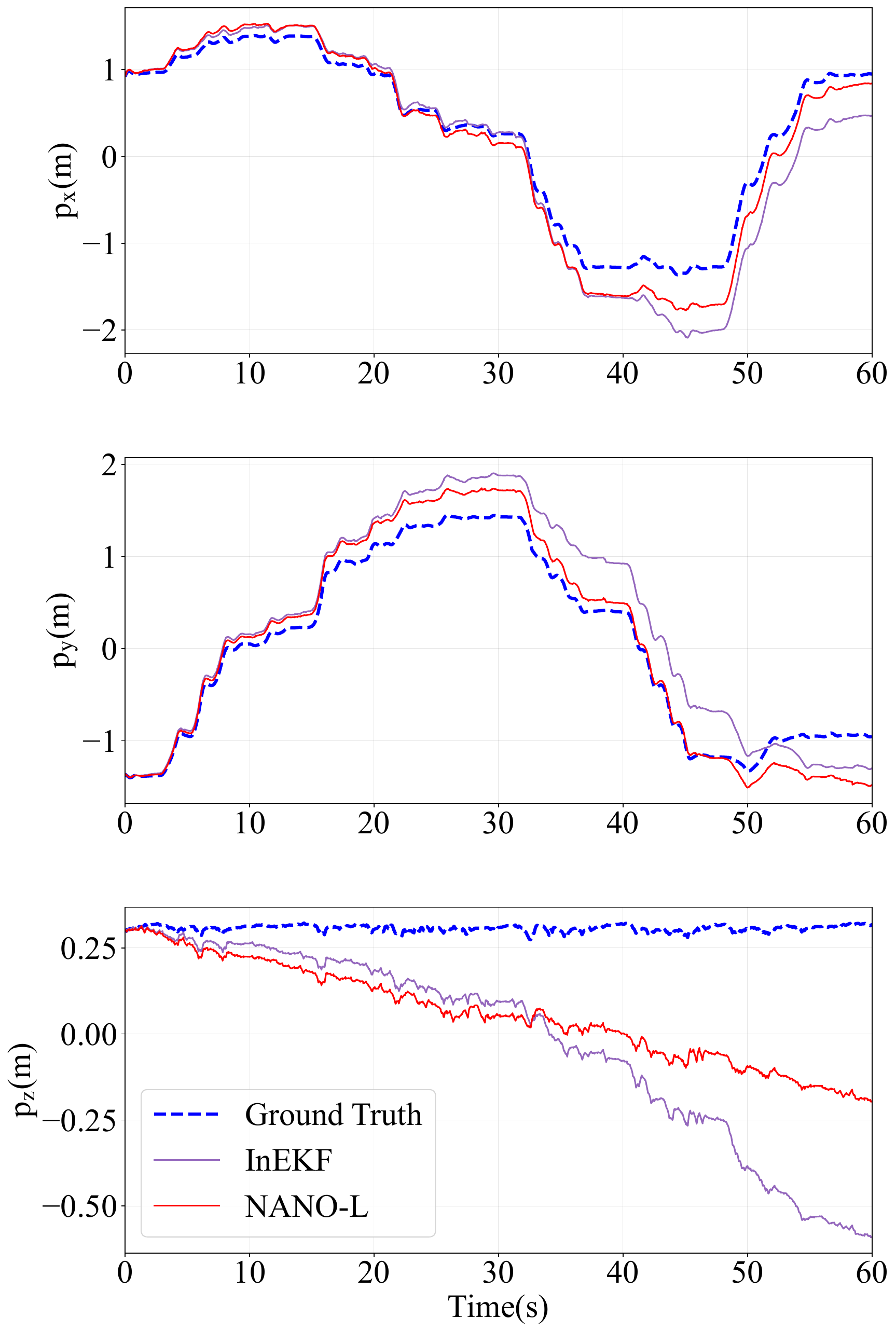}
    \label{fig:unstable_pos}
}
\subfloat{
    \includegraphics[width=0.33\textwidth]{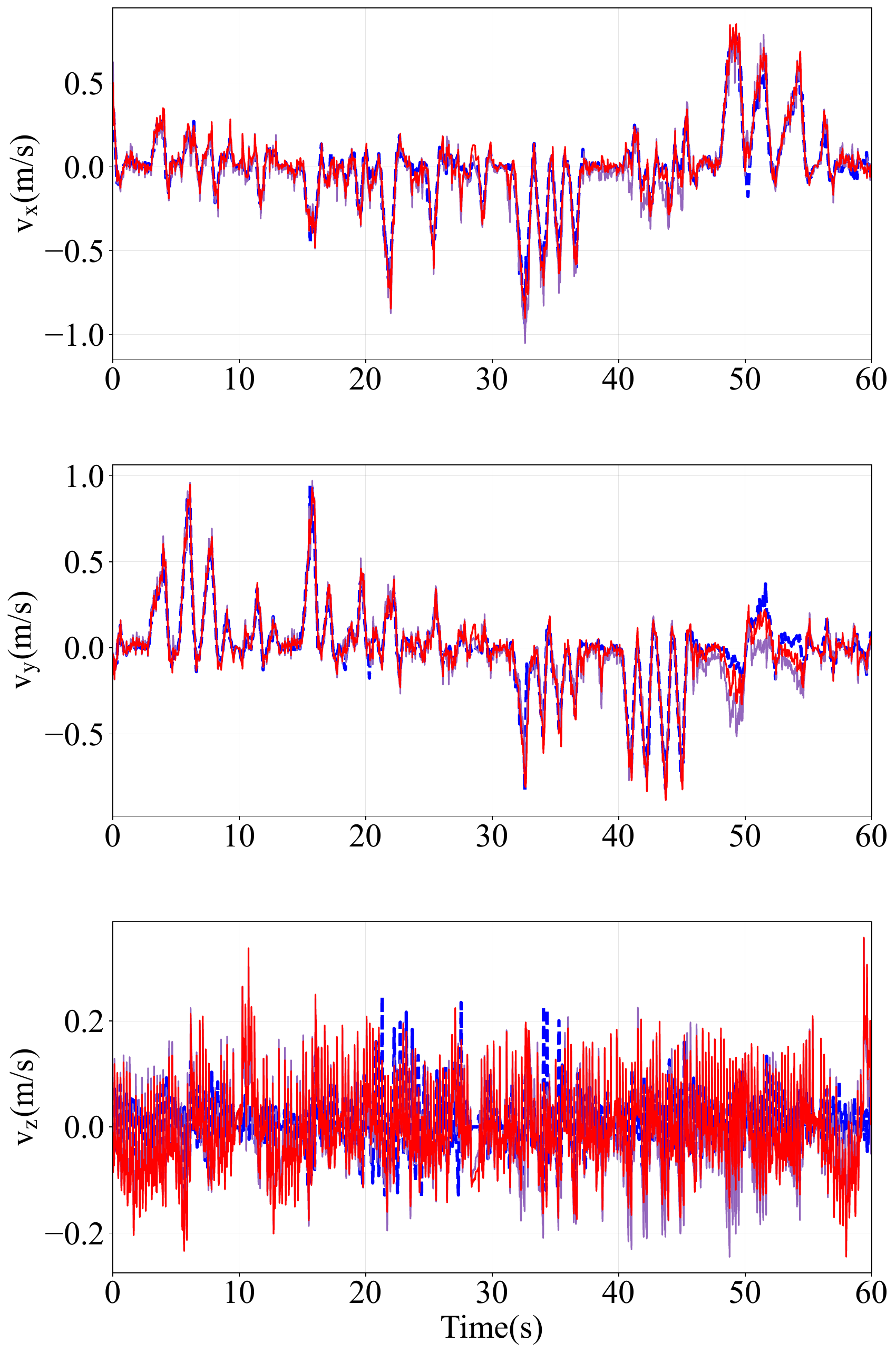}
    \label{fig:unstable_vel}
}
\subfloat{
    \includegraphics[width=0.33\textwidth]{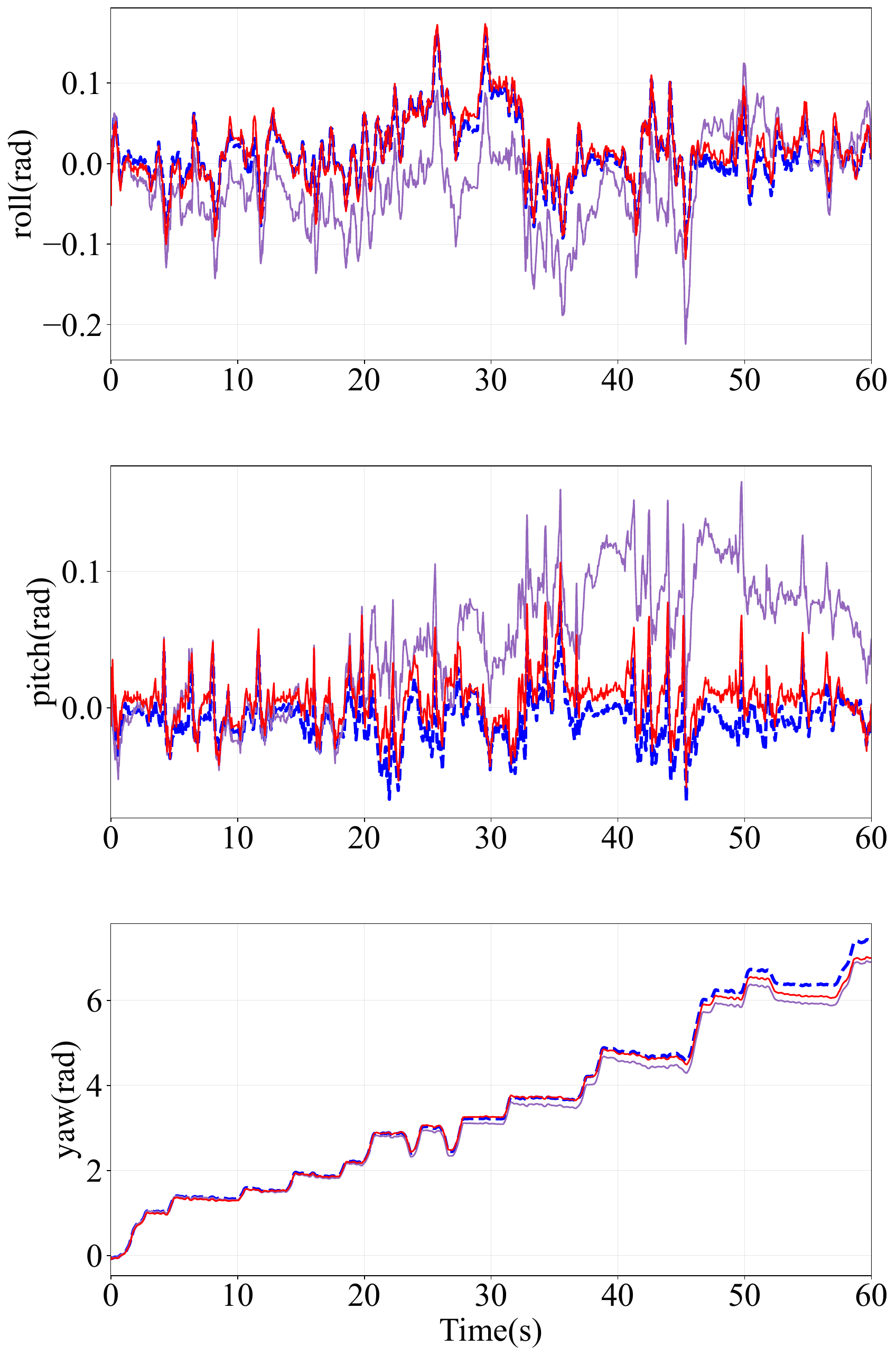}
    \label{fig:unstable_ori}
}
\caption{Estimated position, velocity, and orientation for InEKF and NANO-L on the unstable terrain.}
\label{fig:unstable}
\end{figure*}

\begin{figure*}[!t]
\centering
\subfloat{
    \includegraphics[width=0.33\textwidth]{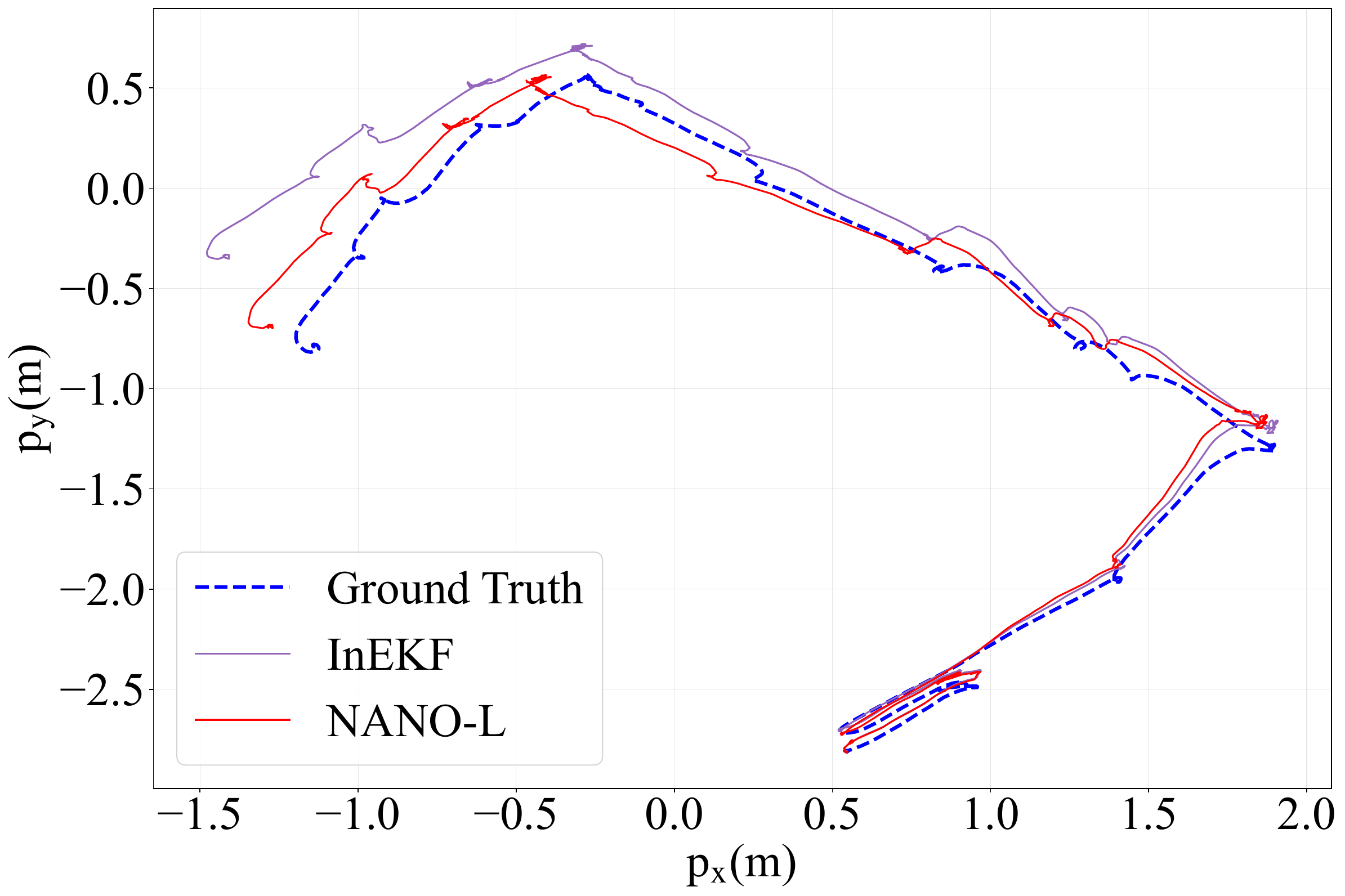}
}
\subfloat{
    \includegraphics[width=0.325\textwidth]{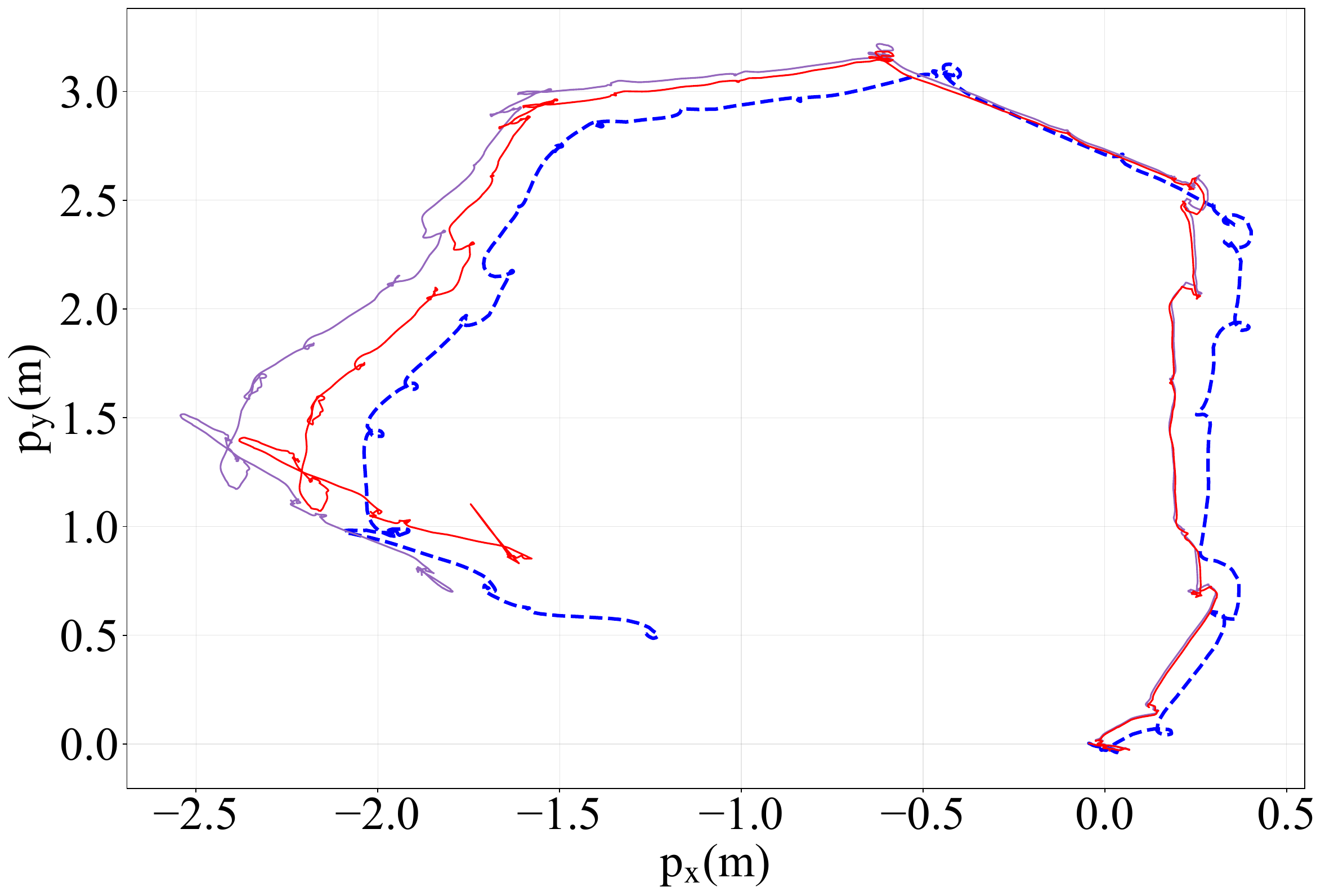}
}
\subfloat{
    \includegraphics[width=0.33\textwidth]{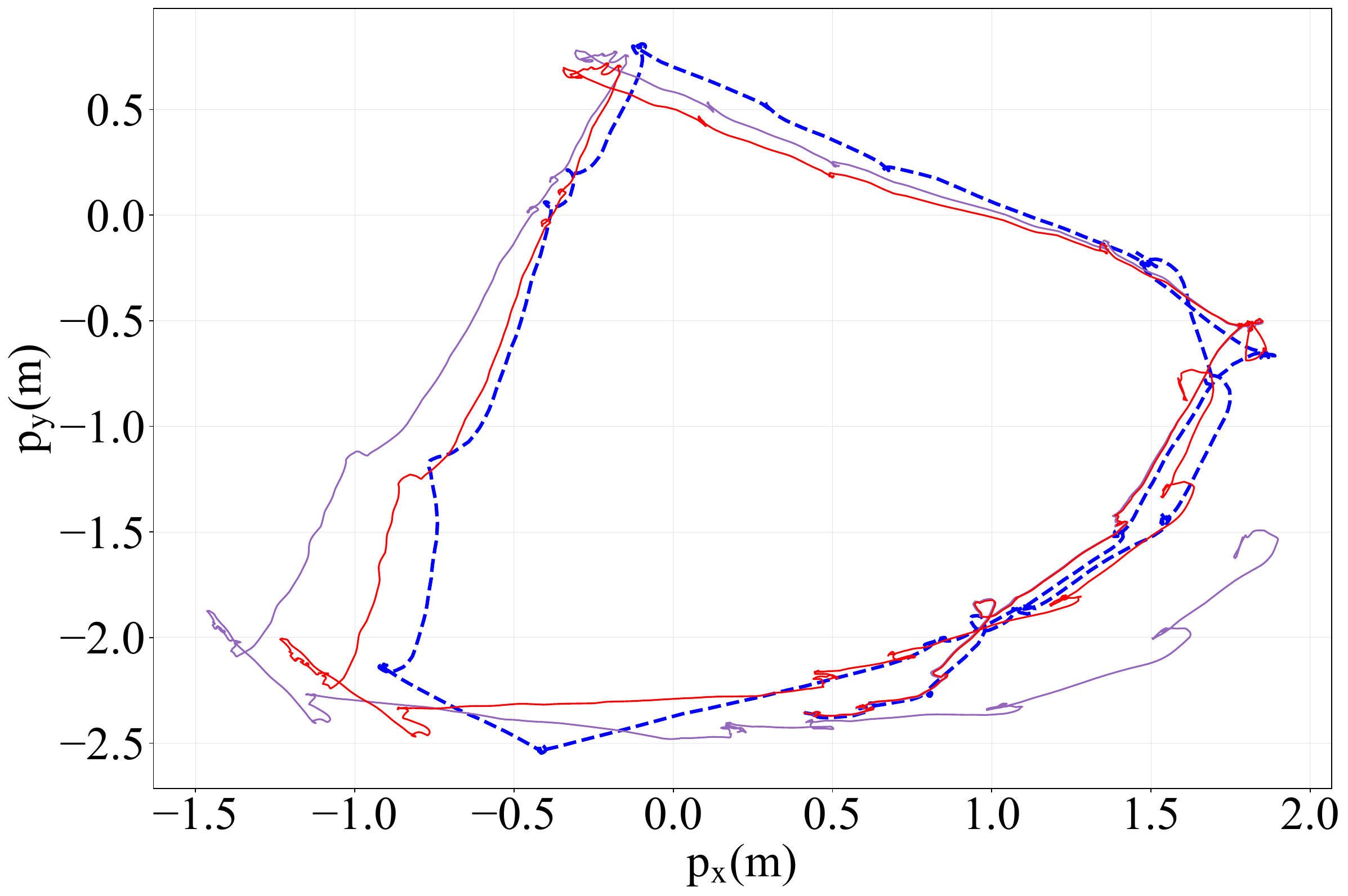}
}
\caption{Estimated trajectories from InEKF and NANO-L on the flat terrain with different distances and durations. From left to right: (9.25 m, 40 s), (10.25 m, 50 s), and (14.6 m, 60 s).}
\label{fig:traj}
\end{figure*}

\subsection{Humanoid Robot State Estimation}
We consider state estimation for a humanoid robot, where the estimator takes IMU measurements and leg odometry as inputs and outputs an estimate of the torso position.
We compare NANO with EKF and UKF baselines in terms of the position and velocity estimation errors.

\subsubsection{System Modeling}
We parameterize the base attitude by Euler angles
$\bm{\theta}_t := [\phi_t,\ \vartheta_t,\ \psi_t]^\top \in \mathbb{R}^3$
(roll--pitch--yaw), and denote the corresponding rotation matrix by
$\bm{R}(\bm{\theta}_t)\in\mathrm{SO}(3)$.
The discrete-time state is
\begin{equation}
\label{eq:g1_state_euler}
\bm{x}_t :=
\left(
\bm{p}_t,\ \bm{v}_t,\ \bm{\theta}_t,\ 
\bm{s}^L_t,\ \bm{s}^R_t,\ 
\bm{b}^a_t,\ \bm{b}^\omega_t
\right)\in\mathbb{R}^{21},
\end{equation}
where $\bm{p}_t,\bm{v}_t\in\mathbb{R}^3$ are the base position/velocity,
$\bm{s}^L_t,\bm{s}^R_t\in\mathbb{R}^3$ are the positions of the left/right foot contact points  in the world frame,
and $\bm{b}^a_t,\bm{b}^\omega_t\in\mathbb{R}^3$ are accelerometer and gyroscope biases.
The IMU input is $\bm{u}_t := (\tilde{\bm{a}}_t,\tilde{\bm{\omega}}_t)$. With sampling interval $\Delta t$, the IMU-driven dynamics can be written in the following discrete form
\begin{equation}
\label{eq:g1_process_block}
\begin{bmatrix}
\bm{p}_{t+1}\\
\bm{v}_{t+1}\\
\bm{\theta}_{t+1}\\
\bm{s}^L_{t+1}\\
\bm{s}^R_{t+1}\\
\bm{b}^a_{t+1}\\
\bm{b}^\omega_{t+1}
\end{bmatrix}
=
\begin{bmatrix}
\bm{p}_{t}+\bm{v}_{t}\Delta t\\
\bm{v}_{t}+\big(\bm{R}(\bm{\theta}_{t})\big(\tilde{\bm{a}}_{t}-\bm{b}^a_{t}\big)-\bm{g}\big)\Delta t\\
\bm{\theta}_{t}+\bm{T}(\bm{\theta}_{t})\big(\tilde{\bm{\omega}}_{t}-\bm{b}^\omega_{t}\big)\Delta t\\
\bm{s}^L_{t}\\
\bm{s}^R_{t}\\
\bm{b}^a_{t}\\
\bm{b}^\omega_{t}
\end{bmatrix}
+
\bm{\xi}_t.
\end{equation}
Here $\bm{\xi}_t\sim\mathcal{N}(\bm{0},\bm{Q}_t)$, and $\bm{T}(\bm{\theta})$ maps body angular velocity to Euler angle rates.
We model $\bm{s}^L,\bm{s}^R$ and the IMU biases as slow random walks and absorb their discrete-time perturbations into $\bm{\xi}_t$
through $\bm{Q}_t$.

Joint encoders provide joint angles $\bm{q}_t$ and velocities $\dot{\bm{q}}_t$.
Forward kinematics gives each foot position in the base frame $\tilde{\bm{r}}^i_t \in \mathbb{R}^3$.
Using the kinematic Jacobian, we also build a body-frame leg-velocity measurement $\tilde{\bm{\ell}}^i_t\in\mathbb{R}^3$.
In addition, we impose a foot-height constraint using a ground height $h_0$.

For each leg $i\in\{L,R\}$, the predicted measurement from the state is
\begin{equation}
\label{eq:g1_meas_leg}
\hat{\bm{y}}^i_t =
\begin{bmatrix}
\bm{R}_t^\top(\bm{s}^i_t-\bm{p}_t) \\
\bm{R}_t^\top\bm{v}_t \\
\bm{e}_3^\top \bm{s}^i_t
\end{bmatrix},
\qquad
\bm{y}^i_t =
\begin{bmatrix}
\tilde{\bm{r}}^i_t \\
\tilde{\bm{\ell}}^i_t \\
h_0
\end{bmatrix}
=
\hat{\bm{y}}^i_t + \bm{\zeta}^i_t,
\end{equation}
where $\bm{e}_3=[0,0,1]^\top$ and $\bm{\zeta}^i_t\sim\mathcal{N}(\bm{0},\bm{\Gamma}^i_t)$.
We stack the two legs to obtain a $14$-dimensional observation $\bm{y}_t=[(\bm{y}^L_t)^\top,(\bm{y}^R_t)^\top]^\top$.

To reduce the impact of contact switching and outliers, we use a simple contact indicator
\begin{equation}
\label{eq:g1_contact}
c^i_t = \mathbf{1}\Big( p^i_{z,t} < h_{\text{th}} \ \wedge\ \| \bm{v}^i_{xy,t}\| < v_{\text{th}}\Big),
\end{equation}
computed from the kinematically reconstructed foot height and slip speed.
When $c^i_t=0$, we inflate the measurement covariance of leg $i$ by a factor $(1+\alpha)$,
so that the filter downweights unreliable leg-odometry constraints during swing or slip.

\subsubsection{Experimental Setup}
We evaluate the estimators on a simulated humanoid trajectory containing both straight walking and turning motions.
For the sequence, IMU measurements and joint encoder data are recorded at 50\,Hz, while the simulator provides ground-truth  for evaluation.
All methods are initialized from the ground-truth state at $t=0$ and are run offline with fixed noise parameters throughout the sequence.
For NANO, we use a learning rate of $0.4$ and one inner iteration per step.
The contact-aware scaling in (\ref{eq:g1_contact}) uses thresholds $h_{\text{th}}=-0.03$, $v_{\text{th}}=0.3$, and inflation factor $\alpha = 10^5$.

\subsubsection{Comparison Results}
We report both global and local errors for position and velocity.

The quantitative results are reported in Table~\ref{tab:g1_pos_vel_metrics}.
Among the three methods, NANO achieves the best performance on all four metrics, where $\mathrm{ATE}_{pos}$ and $\mathrm{ATE}_{vel}$ represent ATE of position and velocity, and $\mathrm{RE}_{pos}$ and $\mathrm{RE}_{vel}$ represent the RE of position and velocity, respectively.
Fig.~\ref{fig:g1_vel} compares the estimated planar velocities.
While the main temporal pattern is captured by all methods, EKF and UKF exhibit stronger oscillations and larger local mismatches during rapid motion transitions.
In comparison, NANO tracks the reference velocity more accurately.

Fig.~\ref{fig:g1_pos} presents the estimated $x$--$y$ trajectories.
All three methods recover the overall motion trend, but clear differences can be observed in estimation accuracy.
NANO remains closest to the ground-truth path over the entire sequence, whereas EKF and UKF show larger deviations, particularly in the turning region and the upper-left part of the trajectory.

\begin{table}[!t]
\fontsize{9}{9}\selectfont
\caption{Comparison of ATE and RE on the simulated trajectory.}
\label{tab:g1_pos_vel_metrics}
\centering
\renewcommand{\arraystretch}{1.15}
\setlength{\tabcolsep}{3pt}
\begin{tabular}{c|cccc}
\toprule[1pt]
Method
& $\mathrm{ATE}_{pos}$ [m]
& $\mathrm{ATE}_{vel}$ [m/s]
& $\mathrm{RE}_{pos}$ [m]
& $\mathrm{RE}_{vel}$ [m/s] \\
\midrule[0.5pt]
EKF
& 0.386 & 0.346 & 0.240 & 0.549 \\

UKF
& 0.387 & 0.345 & 0.239 & 0.551 \\

NANO
& \textbf{0.101} & \textbf{0.267} & \textbf{0.187} & \textbf{0.370} \\
\bottomrule[1pt]
\end{tabular}
\end{table}

\begin{figure}[!t]
\centering
\includegraphics[width=0.48\textwidth]{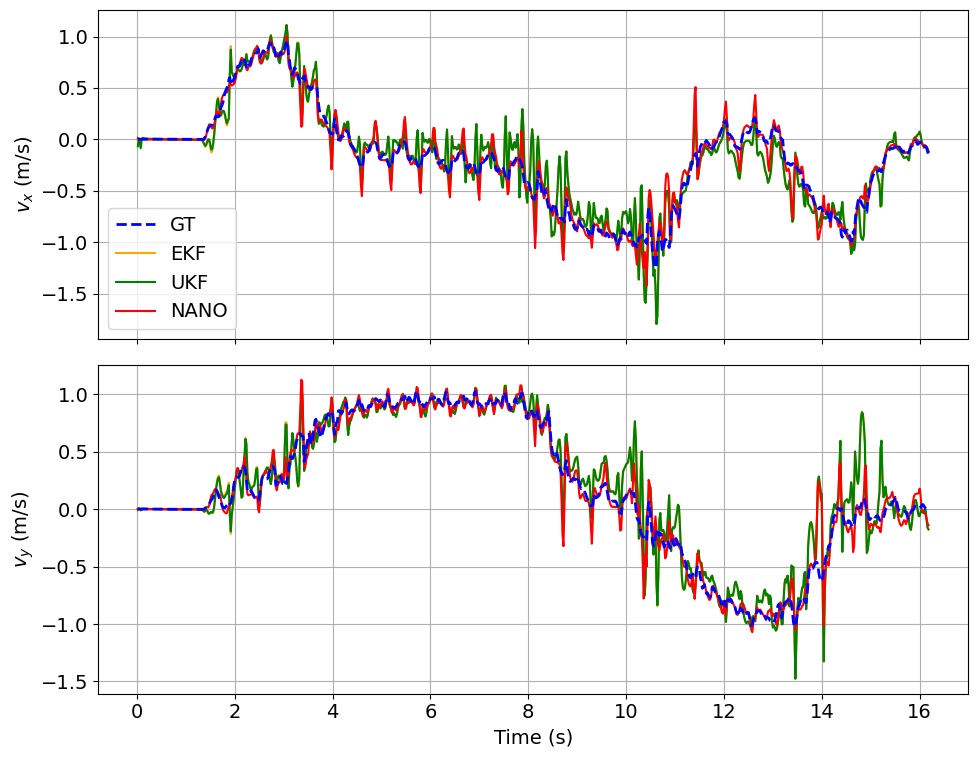}
\caption{Estimated velocity from EKF, UKF and NANO.}
\label{fig:g1_vel}
\end{figure}

\begin{figure}[!t]
\centering
\includegraphics[width=0.48\textwidth]{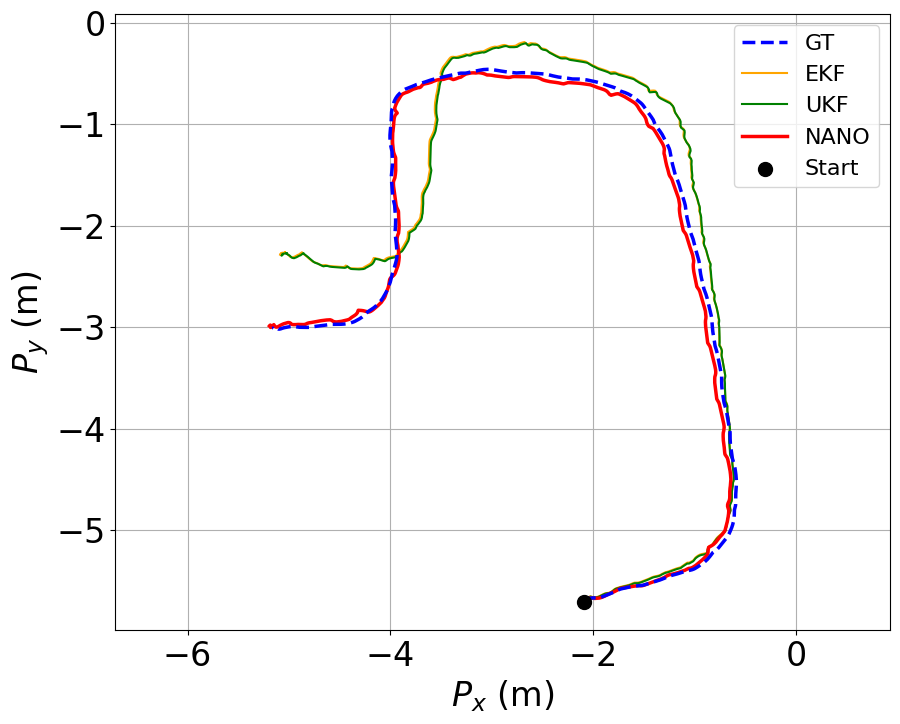}
\caption{Estimated trajectories from EKF, UKF and NANO.}
\label{fig:g1_pos}
\end{figure}

\section{Conclusion}\label{sec.conclusion}
This tutorial revisits Gaussian Bayesian filtering from an information-geometric perspective and presents a unified optimization interpretation of the prediction and update steps. By viewing filtering as variational inference over probability distributions, Gaussian filters can be understood as projecting the true Bayesian posterior onto the manifold of Gaussian distributions. 
Within this framework, we introduced the NANO filter, which performs the measurement update through natural gradient descent on the statistical manifold, yielding geometry-aware updates that respect the intrinsic structure of Gaussian distributions and remain invariant to parameterization. The resulting algorithm directly targets the stationary conditions of the variational formulation and avoids the structural limitations of traditional linearization-based Gaussian filters. 
We further demonstrated that, in the linear–Gaussian case, a single natural-gradient step exactly recovers the classical Kalman filter update, thereby establishing a clear theoretical link to existing methods. 
Practical considerations, including numerical expectation evaluation, covariance stabilization, and extensions to manifold-valued states, were also discussed to facilitate implementation. Overall, the information-geometric viewpoint provides an intuitive and principled framework for understanding and designing Gaussian filters for nonlinear dynamical systems.

\bibliographystyle{ieeetr}
\bibliography{
ref
}


\end{document}